\newtheorem{theorem}{Theorem}[section]
\newtheorem{lemma}[theorem]{Lemma}
\newtheorem{proposition}[theorem]{Proposition}
\theoremstyle{definition}
\newtheorem{definition}[theorem]{Definition}
\newtheorem{remark}[theorem]{Remark}
\DeclareMathOperator*{\argmin}{arg\,min}
\newtheorem{question}[theorem]{Question}
\title[SVM and Radon's Theorem]
      {Support vector machines and Radon's theorem}
\author[Henry Adams and Elin Farnell and Brittany Story]{}
\subjclass{Primary: 52C35, 62-07; Secondary: 62R40.}
 \keywords{Convex geometry, support vector machines, Radon's theorem, linear classification}
\thanks{$^*$Corresponding author: Brittany Story}
\thanks{$^\bullet$ The foundational work for this paper was completed while Elin Farnell was a research scientist in the Department of Mathematics at Colorado State University.}
\thanks{$^\circ$ The foundational work for this paper was completed while Brittany Story was a graduate student in the Department of Mathematics at Colorado State University.}
\begin{document}
\maketitle

\centerline{\scshape
Henry Adams$^{{\href{mailto:henry.adams@colostate.edu}{\textrm{\Letter}}}1}$
and Elin Farnell$^{{\href{mailto:efarnell@amazon.com}{\textrm{\Letter}}}2,\bullet}$
and Brittany Story$^{{\href{mailto:bstory6@utk.edu}{\textrm{\Letter}}}3,\circ,*}$
}

\medskip

{\footnotesize
% Enter the full affiliation and country name:
% Do not insert commas or periods at the end of lines.
 \centerline{$^1$ Department of Mathematics}
  \centerline{Colorado State University}
  \centerline{Fort Collins, CO 80523, USA}
} % Do not forget to end {\footnotesize with the sign }

\medskip

{\footnotesize
 % Enter the full affiliation and country name:
  \centerline{$^2$ Amazon}
  \centerline{Seattle, WA, 98109}
}

\medskip

{\footnotesize
 % Enter the full affiliation and country name:
 \centerline{$^3$ Department of Mathematics}
  \centerline{University of Tennessee Knoxville}
  \centerline{Knoxville, TN 37996, USA}
}

\bigskip

% The name of the handling editor will be entered by AIMS production staff.
% "Communicated by Handling Editor" is not needed for special issue.
\centerline{(Communicated by Handling Editor)}

\begin{abstract}
A support vector machine (SVM) is an algorithm that finds a hyperplane which optimally separates labeled data points in $\mathbb{R}^n$ into positive and negative classes.
The data points on the margin of this separating hyperplane are called \textit{support vectors}.
We connect the possible configurations of support vectors to Radon's theorem, which provides guarantees for when a set of points can be divided into two classes (positive and negative) whose convex hulls intersect.
If the convex hulls of the positive and negative support vectors are projected onto a separating hyperplane, then the projections intersect if and only if the hyperplane is optimal.
Further, with a particular type of general position, we show that 
(a) the projected convex hulls of the support vectors intersect in exactly one point, 
(b) the support vectors are stable under perturbation, 
(c) there are at most $n+1$ support vectors, and 
(d) every number of support vectors from 2 up to $n+1$ is possible.
Finally, we perform computer simulations studying the expected number of support vectors, and their configurations, for randomly generated data.
We observe that as the distance between classes of points increases for this type of randomly generated data, configurations with fewer support vectors become more likely.
\end{abstract}

\maketitle

%%% Section %%%

\section{Introduction}

A support vector machine (SVM), when given a set of linearly separable points in $\mathbb{R}^n$, finds the separating hyperplane with the widest margin of separation between the two classes.
This distance is called the margin of error.
The vectors from the positive and negative classes that minimize the distance to the separating hyperplane are called the \textit{support vectors}; their positions define the location of the optimal separating hyperplane. 
SVMs have several different incarnations but this paper will focus on the most classical type of SVM: hard-margin SVM.
This type of SVM does not allow for any misclassified points, and it is restricted to linearly separable data.
For an example of hard-margin SVM in two dimensions, see Figure~\ref{fig:2dsvm}.
We are interested in the theoretical properties of support vectors in the context of hard-margin SVMs.
To describe these properties, we borrow ideas from geometry and topology, and in particular a result called \textit{Radon's theorem}.

Radon's theorem is a classical result in geometry, which states that given a set $T$ of at least $n+2$ points in Euclidean $n$-dimensional space $\mathbb{R}^n$, there are disjoint subsets $T_1$ and $T_2$ with $T=T_1\cup T_2$ so that the intersection of the convex hulls of $T_1$ and $T_2$ is non-empty.
Radon's theorem, along with other tools from convex geometry, can shed light on the properties of SVM support vectors.

In this paper, we explore the possible configurations of SVM support vectors when given a set of linearly separable labeled points.
Using Radon's theorem, we show that the projections of the convex hulls of the positive and negative support vectors onto the optimal separating hyperplane intersect.
We also show the converse result that given a separating hyperplane, if the projections of the convex hulls of the support vectors intersect, then the separating hyperplane is optimal.
There is more to say when the points are in strong general position (see Definition~\ref{def:sgp2}), which we show is a generic property.
When points are in strong general position in $\mathbb{R}^n$, we show that the number of support vectors is between $2$ and $n+1$.
In this strong general position setting, we show that the projections of the convex hulls of the support vectors onto the separating hyperplane intersect at a \emph{unique} point if and only if the separating hyperplane is optimal.
We also show that support vectors for points in strong general position are stable: perturbing the points by a sufficiently small amount preserves the points that are labeled as support vectors.
Finally, we provide computational experiments showing how the expected number of support vectors changes as a function of the distance between random linearly separable points.
As the distance between classes increases, configurations of points with fewer support vectors become dominant.

%%% Section %%%

\section{Background on support vector machines}
\label{sec:svmbackground}

\begin{figure}
\includegraphics[width=6cm]{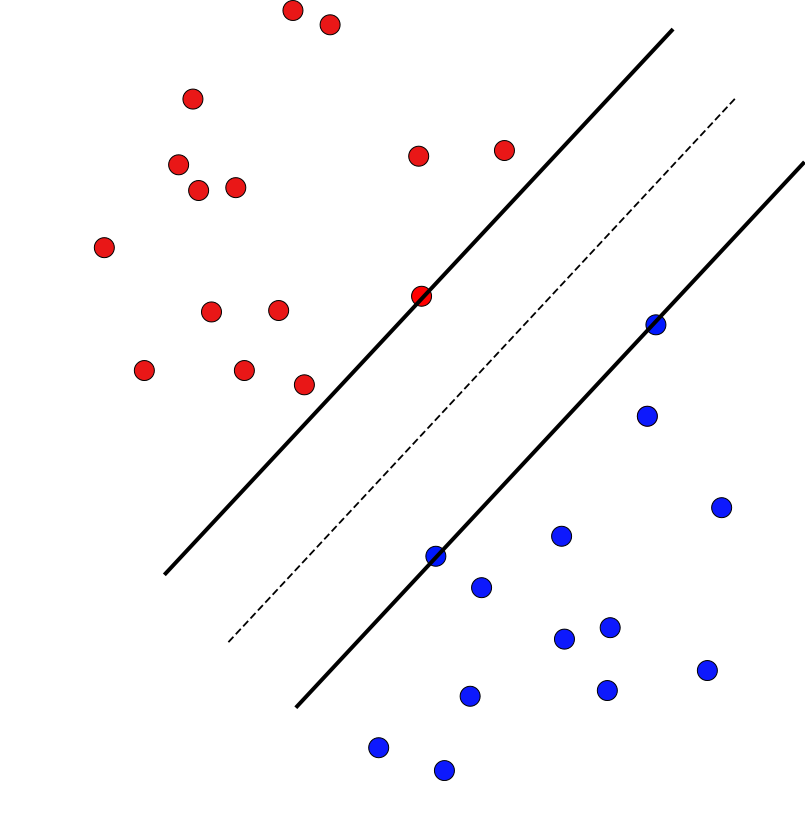}
\caption{Linearly separable two-class data, along with a linear classifier (a separating hyperplane) with the maximal margin of separation.}
\label{fig:2dsvm}
\end{figure}

Support vector machines are a popular supervised learning technique that have seen success in a wide variety of applications; see~\cite{Ma2014SVMapps} for several examples.
We first establish preliminaries on hyperplanes, convex hulls, and linear algebra.
Then we provide a brief introduction to support vector machines (SVMs) as background to this work, but we refer the interested reader to one of several resources for further information:~\cite{hofmann2008kernel,kecman2005support,muller2001introduction,smola2004tutorial}.

%%% Subsection %%%

\subsection{Preliminaries} 

For our work with support vector machines, our focus will be restricted to Euclidean space $\mathbb{R}^n$.
For a vector $\mathbf{x}\in\mathbb{R}^n$ we let $\|\mathbf{x}\|=\sqrt{x_1^2+\cdots+x_n^2}$ denote its length, and for vectors $\mathbf{x},\mathbf{y}\in\mathbb{R}^n$ we let $\langle \mathbf{x},\mathbf{y}\rangle=x_1y_1+\cdots+x_ny_n$ denote their inner product.
Note $\langle \mathbf{x},\mathbf{y}\rangle=\mathbf{x}^T\mathbf{y}$, where the superscript $T$ denotes the matrix transpose. Another name for the inner product is the dot product, and indeed we also denote $\langle \mathbf{x}, \mathbf{y}\rangle$ by $\mathbf{x}\cdot \mathbf{y}$.

A \textit{linear subspace} of $\mathbb{R}^n$ is a vector space subset of $\mathbb{R}^n$; any linear subspace $V\subseteq\mathbb{R}^n$ can be written as $V=\left\{\sum_{i=1}^k c_i\mathbf{v}_i~|~c_i\in \mathbb{R}\right\}$ for some collection of vectors $\mathbf{v}_1,\ldots,\mathbf{v}_k\in \mathbb{R}^n$ with $1\le k\le n$.
The subspace has dimension $k$ if and only if the vectors are linearly independent.
An \textit{affine subspace} of $\mathbb{R}^n$ is a translation of a linear subspace, such as $\mathbf{x}+V:=\{\mathbf{x}+\mathbf{v}~|~\mathbf{v}\in V\}$, where $V$ is a linear subspace of $\mathbb{R}^n$.
A $k$-dimensional affine subspace is also called a \textit{$k$-flat}.
The \emph{affine span} of a set of vectors in $\mathbb{R}^n$ is the smallest affine subspace that contains that set.

Any linear subspace of dimension $n-1$ in $\mathbb{R}^n$ can be written as $\{\mathbf{x}\in\mathbb{R}^n~|~\mathbf{w}^T\mathbf{x}=0\}$ for some normal vector $\mathbf{w}\in\mathbb{R}^n$.
Similarly, any affine subspace of dimension $n-1$ in $\mathbb{R}^n$ can be written as $\{\mathbf{x}\in\mathbb{R}^n~|~\mathbf{w}^T\mathbf{x}=b\}$ for some normal vector $\mathbf{w}\in\mathbb{R}^n$ and offset $b\in \mathbb{R}$.
We refer to $(n-1)$-dimensional affine subsets of $\mathbb{R}^n$ as \textit{hyperplanes}.

As Radon's theorem addresses intersections of convex hulls, we first must establish what it means to be a convex hull. 
A set $T\subseteq \mathbb{R}^n$ is said to be \textit{convex} if for all $\mathbf{x},\mathbf{y}\in T$ and $t\in (0,1)$, we also have that $t\mathbf{x}+(1-t)\mathbf{y}\in T$. 
Intuitively, in two-dimensional space, the convex hull of a set of points can be visualized by putting a rubber band around all of the points in the set. 
More formally, let $\mathrm{conv}(T)$ denote the \textit{convex hull} of a set of points $T\subseteq \mathbb{R}^n$, which is the minimal convex set in $\mathbb{R}^n$ containing $T$,
or equivalently, the intersection of all convex sets in $\mathbb{R}^n$ containing $T$. 
More explicitly, we can define $\mathrm{conv}(T)$ as 
\[\mathrm{conv}(T)=\left\{\sum_{i=1}^k \alpha_{i}\mathbf{x}_{i}\,{\Bigg |}\,k\ge 1,\,\mathbf{x}_{i}\in T,\,\alpha_{i}\ge 0,\,\sum _{i=1}^{k}\alpha _{i}=1\right\}.\]
Finally, let a \textit{convex combination} of a set of points be a linear combination of those points where all coefficients are non-negative and sum to 1.

%%% Subsection %%%

\subsection{Relevant linear algebra}\label{ssec:lin_alg}

We recall some facts from linear algebra, which will be used in Section~\ref{sec:svm-general-selected} to show that the SVM support vectors remain support vectors under small perturbations.

\begin{lemma}
\label{lem:lin_dep}
A set of vectors $\mathbf{a}_1, \mathbf{a}_2, \ldots \mathbf{a}_k \in \mathbb{R}^n$ is linearly dependent if and only if $\det(A^{T} A)=0$, where $A$ is the $n\times k$ matrix whose $i$-th column is $\mathbf{a}_i$. 
\end{lemma}

\begin{proof}
We proceed with a sequence of if-and-only-if equivalences.
First, the vectors $\left\{\mathbf{a}_i\right\}_{i=1}^{i=k}$ are dependent if and only if the nullspace of $A$ is non-trivial.
But the nullspace of $A$ is non-trivial if and only if the nullspace of $A^T A$ is non-trivial: \\
($\Longrightarrow$) If the nullspace of $A$ is non-trivial, then there exists a non-zero vector $\mathbf{x}$ such that $A\mathbf{x}=\mathbf{0}.$
But then we also have $(A^TA)\mathbf{x}=A^T(A\mathbf{x})=A^T\mathbf{0}=\mathbf{0}$, so the nullspace of $A^TA$ is non-trivial as well.\\
($\Longleftarrow$) Suppose there is a non-zero vector $\mathbf{x}$ such that $A^T A\mathbf{x}=\mathbf{0}$.
Then $\mathbf{0} = \langle \mathbf{x}, A^T A\mathbf{x}\rangle = \langle A\mathbf{x}, A\mathbf{x}\rangle = \| A\mathbf{x} \|^2,$ and hence $A\mathbf{x}=0$.\\
So we have that the vectors $\left\{\mathbf{a}_i\right\}_{i=1}^{i=k}$ are dependent if and only if the nullspace of $A$ is non-trivial if and only if the nullspace of $A^TA$ is non-trivial, which occurs precisely when the determinant of $A^T A$ is zero.
\end{proof}

The following lemma implies that any affine subspace of $\mathbb{R}^n$ has a unique closest point to the origin.

\begin{lemma}
\label{lem:min_Euclidean_norm}
Let $\mathbf{a}_1, \mathbf{a}_2, \ldots \mathbf{a}_k \in \mathbb{R}^n$ be linearly independent and let $\mathbf{d}\in \mathbb{R}^n$.
Then, there is a unique choice of coefficients, $\mathbf{t}=(t_1,\ldots,t_k)^T$, minimizing the Euclidean norm of the vector 
\[\mathbf{w}=\mathbf{d}+\sum\limits_{i=1}^k t_i \mathbf{a}_i.\]
Specifically, 
\[\mathbf{t} = -(A^T A)^{-1}A^T \mathbf{d}\text{ and }\mathbf{w} = \mathbf{d} - A(A^T A)^{-1}A^T \mathbf{d},\]
where $A$ is the $n\times k$ matrix whose $i$-th column is $\mathbf{a}_i$.
It follows that the entries of $\mathbf{t}$ and $\mathbf{w}$ are rational functions of the entries of the vectors $\mathbf{a}_1, \mathbf{a}_2, \ldots \mathbf{a}_k, \mathbf{d}$, where the denominators may be taken to be $\det(A^T A)$.
\end{lemma}

\begin{proof}
Note, the minimum norm occurs exactly when $\mathbf{w}$ is orthogonal to each $\mathbf{a}_i$. 
This implies that for all $1\le i \le k$,
\[0=\langle\mathbf{a}_i,\mathbf{w}\rangle = \langle\mathbf{a}_i,\mathbf{d}+A\mathbf{t}\rangle = \langle\mathbf{a}_i,\mathbf{d}\rangle + \langle\mathbf{a}_i,A\mathbf{t}\rangle = \mathbf{a}_i^T\mathbf{d}+\mathbf{a}_i^T A\mathbf{t}.\]
Setting these $k$ equations on top of each other gives the vector equation $\mathbf{0}=A^T \mathbf{d}+A^T A \mathbf{t}$. 
Hence, $\mathbf{t} = -(A^T A)^{-1}A^T \mathbf{d}$ and $\mathbf{w} = \mathbf{d}+A\mathbf{t} = \mathbf{d} - A(A^T A)^{-1}A^T \mathbf{d}$ is the unique solution, since $A^T A$ is invertible by Lemma~\ref{lem:lin_dep}.
\end{proof}

%%% Subsection %%%

\subsection{Optimal separating hyperplanes} \label{ssec:opt-sep}

Consider a finite set $X\subseteq \mathbb{R}^n$ such that each point $\mathbf{x}\in X$ is equipped with a label, $1$ or $-1$.
Let $X_+\subseteq X$ be the subset of $X$ that has the label $1$, and similarly for $X_-$.
We assume the two classes are linearly separable; that is, we assume there exists a hyperplane $H$ (an $(n-1)$-dimensional affine subspace of $\mathbb{R}^n$) such that each point in $X_+$ is on one side of the hyperplane, and each point in $X_-$ is on the opposite side of the hyperplane.
See Figure~\ref{fig:2dsvm} for an example of linearly separable two-class data.

Let us explain what we mean when we say that a separating hyperplane $H$ is optimal.
Let $H_+$ denote the hyperplane that is a translation of $H$ and that satisfies both of the following conditions:
\begin{itemize}
    \item $H_+\cap X_+\neq \emptyset$, and 
    \item $H_+$ separates $X_-$ from the remaining points in $X_+$.
\end{itemize}
Define $H_-$ similarly.
The margin of separation is the distance between the parallel hyperplanes $H_+$ and $H_-$. 
A separating hyperplane $H$ is \emph{optimal} (or $H$, $H_+$, and $H_-$ are \emph{optimal}) if $H_+$ and $H_-$ have the largest possible margin of separation, amongst all such pairs of parallel hyperplanes.
In general, once a normal vector to a separating hyperplane is identified, it is straightforward to find $H_+$, $H_-$, and $H$ via translation (typically, $H$ is chosen to be halfway in-between $H_+$ and $H_-$).
As a result, we will consider the identification of a normal vector to the optimal separating hyperplane as equivalent to identifying the optimal hyperplane $H$ itself, and we will refer to the (implicitly defined) hyperplane $H$ as \emph{optimal}.
The problem of finding the optimal separating hyperplane is precisely the optimization problem that the SVM algorithm solves, as will be discussed in the following section.

%%% Subsection %%%

\subsection{SVMs in the linearly separable case} \label{ssec:svm_lin_sep}

We begin with the most basic definition of support vector machines, and discuss more general versions in Section~\ref{ssec:soft-margin}. 
What follows is a common derivation of the SVM problem; see for example~\cite{Cervantes2020svm_survey}.
Consider a set of training data $X=\left\{ \mathbf{x}_i\right\}_{i=1}^m$ and associated class labels $\left\{y_i\right\}_{i=1}^m$, where each $y_i\in \left\{-1,1\right\}$.
We assume the two classes $X_+$ and $X_-$ are linearly separable.
From this data, we train a support vector machine, which is a linear classifier maximizing the separation between the two classes of training data.

We construct the support vector machine as a nonlinear optimization problem with constraints, as follows.
First, define the classifier as $f(\mathbf{x})=\mathbf{w}^T\mathbf{x}+b$, where $\mathbf{w}$ and $b$ are the parameters that must be learned.
Then the hyperplane $f(\mathbf{x})=0$ defines a linear decision boundary, and the sign of $f(\mathbf{x})$ (denoted $\mathrm{sign}(f(\mathbf{x}))\in\{-1,1\}$) determines which of the two classes the classifier predicts as the true membership class of data point $\mathbf{x}$.

Note that for any constant $c\neq 0$, the parameters $c\mathbf{w}$ and $cb$ would define the same hyperplane as $\mathbf{w}$ and $b$.
Thus, we introduce the notion of a canonical hyperplane as in~\cite{kecman2005support}: Given a set of data $\mathbf{x}_i\in X$, a hyperplane representation is \textit{canonical} if $\min_{\mathbf{x}_i}|\mathbf{w}^T\mathbf{x}_i+b|=1$.
We define the \textit{support vectors} for such a hyperplane to be precisely those $\mathbf{x}_i$ for which this minimum is achieved. 

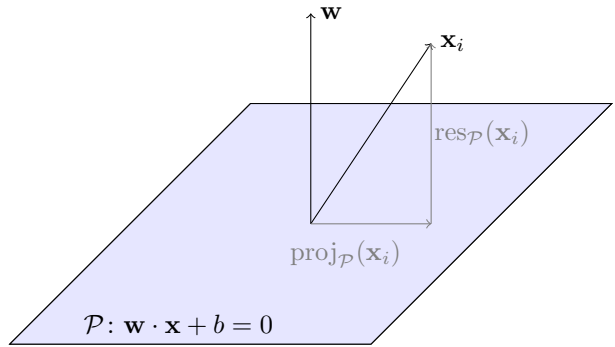
\begin{figure}[htb]
\begin{tikzpicture}[scale=0.8]
\begin{pgfonlayer}{bg}
    \coordinate [] (A) at (-5,-2);
    \coordinate [] (B) at (-1,2);
    \coordinate [] (C) at (5,2);
    \coordinate [] (D) at (1,-2);
    \foreach \i in {A,B,C,D}
    \draw[-, fill=blue!10, opacity=.7] (A)--(B)--(C)--(D)--cycle;
\end{pgfonlayer}
    \coordinate [] (E) at (0,0);
    \coordinate [label={[right]$\mathbf{w}$}] (F) at (0,3.5);
    \coordinate [] (G) at (2,0);
    \coordinate [label={[right] $\mathbf{x}_i$}] (H) at (2,3);
    \coordinate [label={[right] \textcolor{gray}{$\textrm{res}_{\mathcal{P}}(\mathbf{x}_i)$}}] (I) at (1.9,1.5);
    \coordinate [label={[right] \textcolor{gray}{$\mathrm{proj}_\mathcal{P}(\mathbf{x}_i)$}}] (J) at (-0.5,-.5);
    \coordinate [label={$ \mathcal{P}\colon \mathbf{w}\cdot \mathbf{x}+b=0$}] (K) at (-2.2,-2);
   \draw [->] (E) -- (F); 
   \draw [->] (E) -- (H);
   \draw [gray,->] (E) -- (G); 
   \draw [gray,->] (G) -- (H);
\end{tikzpicture}
\caption{Decision boundary hyperplane $\mathcal{P}$ along with a vector $\mathbf{x}_i$ and its decomposition with respect to $\mathcal{P}$, where $\mathbf{w}$ and $\mathbf{x}_i$ are drawn with tails at the same base point in the hyperplane for context.}
\label{fig:margin}
\end{figure}

Recall that the support vector machine defined by $f(\mathbf{x})=\mathbf{w}^T\mathbf{x}+b$ should be a linear classifier with maximal margin of separation between two data classes. 
Therefore, by characterizing the margin of separation between the classes, we can determine the form the appropriate optimization problem should take.
Consider the hyperplane $\mathcal{P}$ defined by
\[\mathcal{P}=\{\mathbf{x}\in\mathbb{R}^n~|~\mathbf{w}^T\mathbf{x}+b=0\},\]
and let $\mathbf{x}_i$ be a support vector with angle less than $\pi/2$ with $\mathbf{w}$ (thus, $\mathbf{x}_i$ is on the side of the hyperplane that $\mathbf{w}$ points to).
Since our hyperplane representation is canonical, this gives $\mathbf{w}\cdot\mathbf{x}_i=1-b$.
As in Figure~\ref{fig:margin}, note that $\mathbf{x}_i$ can be decomposed orthogonally into a component in $\mathcal{P}$ and its residual: $\mathbf{x}_i=\mathrm{proj}_\mathcal{P}(\mathbf{x}_i)+\textrm{res}_\mathcal{P}(\mathbf{x}_i)$.
The minimum distance $d$ from the hyperplane $\mathcal{P}$ to $X$ is then $d=\|\textrm{res}_\mathcal{P}(\mathbf{x}_i)\|$.
Note that by construction, we have $\textrm{res}_\mathcal{P}(\mathbf{x}_i)=d\frac{\mathbf{w}}{\|\mathbf{w}\|}$, for $d\in\mathbb{R}$, where $\frac{\mathbf{w}}{\|\mathbf{w}\|}$ is the unit vector in the direction of $\mathbf{w}$.
Taking the dot product of the decomposition of $\mathbf{x}_i$ with $\mathbf{w}$ and using the fact that $\mathbf{x}_i$ is a support vector, we have 
\begin{align*}
1-b&=\mathbf{w}\cdot\mathbf{x}_i\\
&=\mathbf{w}\cdot\mathrm{proj}_\mathcal{P}(\mathbf{x}_i)+\mathbf{w}\cdot\textrm{res}_\mathcal{P}(\mathbf{x}_i)\\
&=-b + \mathbf{w}\cdot\left(d\frac{\mathbf{w}}{\|\mathbf{w}\|}\right)&&\text{since }\mathrm{proj}_\mathcal{P}(\mathbf{x}_i)\in\mathcal{P}\\
&=-b + d\|\mathbf{w}\|.
\end{align*}
Adding $b$ to both sides gives $1= d\|\mathbf{w}\|$, i.e., $d=\frac{1}{\|\mathbf{w}\|}$.

A similar argument produces the same conclusion when $\mathbf{x}_i$ is assumed to be a support vector with angle $\theta\in\left(\pi/2,\pi\right]$ with $\mathbf{w}$.
Thus, we have that the minimum distance from the linear decision boundary to any point $\mathbf{x}_i$ in $X$ is $\frac{1}{\|\mathbf{w}\|}$.
This means that the margin of linear separation between the classes is $\frac{2}{\|\mathbf{w}\|}$.
Consequently, in order to maximize the margin of separation, we seek to minimize the norm of $\mathbf{w}$.
For convenience, we will instead minimize $\frac{1}{2}\|\mathbf{w}\|^2$.

To characterize the correct classification of all $\mathbf{x}_i\in X$, we observe that if $y_i=1$, a support vector machine with perfect classification must have $f(\mathbf{x}_i)>0$, and if $y_i=-1$, we require $f(\mathbf{x}_i)<0$.
But our definition of canonical hyperplane requires that $|f(\mathbf{x}_i)|\geq 1$ for all $i$.
So classification of all points is correct if $y_if(\mathbf{x}_i)\geq 1$ for all $i$, i.e.\ if $ y_i\left(\mathbf{w}^T\mathbf{x}_i+b \right)\geq 1$ for all $i$.
Therefore, the support vector machine optimization problem is
\[\argmin_{\mathbf{w},b} \frac{1}{2}\|\mathbf{w}\|^2 \textrm{ subject to } y_i\left(\mathbf{w}^T\mathbf{x}_i+b \right)\geq 1 \textrm{ for all }i.\]

%%%%%%%%% Subsection %%%%%%%%%%%%%

\subsection{Solving the SVM optimization problem}
\label{ssec:solving}

Now that we have an optimization problem, we need some additional tools to solve it.
The Karush-Kuhn-Tucker (KKT) theorem can be applied to the SVM optimization problem and provide an optimal solution, provided the original problem satisfies certain conditions, which can be found at~\cite{platt1998svmsmo}.
The KKT theorem works like Lagrange multipliers:
under the right conditions, it takes a lower-dimensional bounded problem and turns it into a higher-dimensional unbounded problem, which can be solved using calculus.
Upon applying the KKT conditions to SVMs, the resulting unbounded optimization problem is 
\[
L(\alpha)=\sum_{j=1}^m\alpha_j-\frac{1}{2}\sum_{i=1}^m\sum_{j=1}^m\alpha_i\alpha_jy_iy_j\langle \mathbf{x}_i,\mathbf{x}_j\rangle,
\]
where the scalars $\alpha_j$ play the role of Lagrange multipliers.

From here, we can solve the unbounded optimization problem using a variety of methods.
In the dual problem, the non-zero numbers $\alpha_j$ correspond to the vectors $\mathbf{x}_j$ that are support vectors.
This defines the desired hyperplane which best divides the two classes of data.
Indeed (see for example~\cite[Section~7.1]{smola2008introduction}), we have $\mathbf{w}=\sum_j y_j\alpha_j\mathbf{x}_j$ and $\sum_j \alpha_j y_j=0$, where $\mathbf{w}$ is the normal vector to the optimal separating hyperplane.

%%%%%%%%%%%% Subsection %%%%%%%%%%%%

\subsection{SVMs in the general case (soft margins)}
\label{ssec:soft-margin}

The most common versions of support vector machines do not require the hypothesis that the data classes are linearly separable --- indeed, the optimization problem is edited to optimize over two competing preferences: maximizing the margin of separation, i.e.\ the ``width of the road'', while also minimizing the extent to which points lie within the margin or are misclassified. 
To accomplish this, we look at the hinge loss function, $\max(0,1-y_i(\mathbf{w}\cdot\mathbf{x}_i+b))$, which yields 0 if the point is on the correct side of the margin, and which provides a loss proportional to how far away a point on the incorrect side is from the margin.
As discussed in~\cite{smola2004tutorial}, the soft-margin optimization problem is to minimize
\[
\left[\frac{1}{m}\sum_{i=1}^m \max(0,1-y_i(\mathbf{w}\cdot\mathbf{x}_i+b))\right]+\lambda\|\mathbf{w}\|^2,
\]
where $\lambda$ is a parameter which tunes for the compromise between the width of the margin and classification accuracy on the training data.

Given this new optimization problem, we could ask what support vector configurations are possible in the soft-margin case, and whether or not those configurations are related to Radon's theorem.
Further, we could compare the soft-margin support configurations to those found in the hard-margin case.
This paper will focus on hard-margin support vector machines, restricted to linearly separable data, which is the mathematically simpler case.

%%%%%%%%%%%%% Subsection %%%%%%%%%%%%%%%

\subsection{SVMs and VC dimension} 

The Vapnik--Chervonenkis (VC) dimension, first introduced by Vapnik and Chervonenkis in 1971~\cite{vapnik1971convergance}, is a measure of the complexity of a classification model.
Given a finite set of points $X$, note that there are $2^{|X|}$ ways to label the points with labels $1$ or $-1$.
We say that a classification model can \textit{shatter} a set of points $X$ if no matter how the labels $1$ or $-1$ are assigned to $X$, we can find a classifier from that model that correctly recovers the assigned labels.
The \emph{VC dimension} of a classification model is the cardinality of the largest set of points that model can shatter~\cite{sakr2016vc}.

As an example, consider the VC dimension of affine separators (binary classifiers that are defined by location on one side or the other of an affine hyperplane) in $\mathbb{R}^2$.
Note that support vector machines are a particular way to choose the affine hyperplane for an affine separator.
Consider first a set $X\subseteq \mathbb{R}^2$ of three points ($|X|=3$) that do not lie on a single line.
Note there are $2^3=8$ ways to label these points with values in $\{-1,1\}$.
No matter how those three points are labelled with values in $\{-1,1\}$, there exists an affine line that correctly separates the two classes ($1$ and $-1$).
Thus, the VC dimension of affine separators in $\mathbb{R}^2$ is at least three.

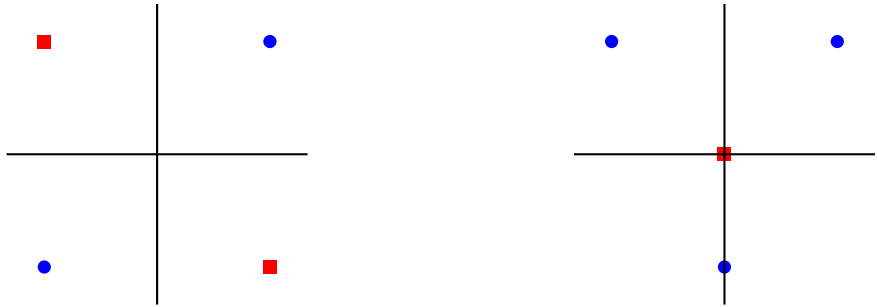
\begin{figure}[htb]
\begin{center}
\begin{tabular}{c p{8em} c}
\begin{tikzpicture}[scale=0.5]
    \node [circle,fill=blue,inner sep=0pt,minimum size=5pt] (A) at (3,3) {};
    \node [rectangle,fill=red,inner sep=0pt,minimum size=5pt] (B) at (-3,3) {};
    \node [circle,fill=blue,inner sep=0pt,minimum size=5pt] (C) at (-3,-3) {};
    \node [rectangle,fill=red,inner sep=0pt,minimum size=5pt] (D) at (3,-3) {};
    \draw[-,thick] (0,-4)--(0,4);
    \draw[-,thick] (-4,0)--(4,0);
\end{tikzpicture} 
&&
\begin{tikzpicture}[scale=0.5]
    \node [circle,fill=blue,inner sep=0pt,minimum size=5pt] (A) at (3,3) {};
    \node [circle,fill=blue,inner sep=0pt,minimum size=5pt] (B) at (-3,3) {};
    \node [circle,fill=blue,inner sep=0pt,minimum size=5pt] (C) at (0,-3) {};
    \node [rectangle,fill=red,inner sep=0pt,minimum size=5pt] (D) at (0,0) {};
    \draw[-,thick] (0,-4)--(0,4);
    \draw[-,thick] (-4,0)--(4,0);
\end{tikzpicture}\\
\end{tabular}
\end{center}
\caption{
Examples of two sets in $\mathbb{R}^2$ of size 4 that cannot be shattered by affine separators.
Consider the labeled classes drawn as red squares and blue circles.
In each example, the convex hulls of the two classes intersect at the origin.
This implies that no affine separator can correctly classify according to these labels.
}
\label{fig:unshatterable}
\end{figure}

We now argue that the VC dimension of affine separators in $\mathbb{R}^2$ is exactly three~\cite{vapnik2013nature}.
% page 81
To do this, we must show that for any set $X\subseteq \mathbb{R}^2$ with $|X|=4$, the set $X$ cannot be shattered.
Radon's theorem, which is discussed further in Section~\ref{sec:radon}, states that since $X$ is a set of $4$ points in $\mathbb{R}^2$, there must be disjoint sets $X_+$ and $X_-$ with $X=X_+\cup X_-$ and $\mathrm{conv}(X_+)\cap \mathrm{conv}(X_-)\neq\emptyset$ (see Figure~\ref{fig:unshatterable}).
If we label all the points in $X_+$ as belonging to the positive class, and all the points in $X_-$ as belonging to the negative class, then no affine separator will be able to correctly classify these labels.
Indeed, any affine separator assigning each point in $X_+$ the label $1$ must assign the entire convex hull $\mathrm{conv}(X_+)$ the label $1$, and any affine separator assigning each point in $X_-$ the label $-1$ must assign all of $\mathrm{conv}(X_-)$ the label $-1$.
This contradicts the fact that $\mathrm{conv}(X_+)\cap \mathrm{conv}(X_-)$ is non-empty!
Hence no set of four points in $\mathbb{R}^2$ can be shattered by an affine separator, and so the VC dimension of affine separators in $\mathbb{R}^2$ is three.

More generally, Radon's theorem states that if $X$ is a set of $k$ points in Euclidean $n$-dimensional space $\mathbb{R}^n$ with $k\geq n+2$, then there are disjoint sets $X_+$ and $X_-$ with $X=X_+\cup X_-$ and $\mathrm{conv}(X_+)\cap \mathrm{conv}(X_-)\neq\emptyset$.
So, if we label the points in $X_+$ as the positive class and the points in $X_-$ as the negative class, then since their convex hulls intersect, this set of points cannot be shattered by an affine separator.
Thus, no configuration of $n+2$ or more points in $\mathbb{R}^n$ can be shattered.
It follows that $n+1$ is an upper bound for the VC dimension of affine separators in $\mathbb{R}^n$.
It is also true that so long as $n+1$ points in $\mathbb{R}^n$ do not lie in an $(n-1)$-dimensional affine plane (for example, if those points live at the vertices of a regular $n$-simplex), then those points can be shattered by an affine separator.
So the VC dimension of affine separaters in $\mathbb{R}^n$ is exactly $n+1$.

%%%%%%%%%%%% Section %%%%%%%%%%%%%%%

\section{Background on Radon's theorem}
\label{sec:radon}

Radon's theorem is a classical result in convex geometry that has applications across a variety of fields.
We proceed with a self-contained description of Radon's theorem, and we include corollaries that pertain to classifying support vectors in $\mathbb{R}^n$.
First we state and prove Radon's theorem.
Although the proof is well-known, we include it here since the ideas within will reappear later.
The original reference is~\cite{radon1921mengen}, and modern references include, for example,~\cite{clarkson1996approximating,Matousek:2002:LDG,peterson1972geometry}.
The proof we give follows~\cite{Matousek:2002:LDG}.
%Additional proofs of Radon's theorem can be found in a variety of sources; for example a geometric proof can be found in a paper by Peterson ~\cite{peterson1972geometry}.
% algebraic proof: (\footnote{\url{https://perso.esiee.fr/~mustafan/TechnicalWritings/math-lec2.pdf}}) 

\begin{theorem}[Radon's Theorem]
\label{thm:Radon}
If $T$ is a set of $k$ points in $\mathbb{R}^n$ with $k\geq n+2$, then there are disjoint sets $T_1$ and $T_2$ with $T=T_1\cup T_2$ and $\mathrm{conv}(T_1)\cap \mathrm{conv}(T_2)\neq\emptyset$.
\end{theorem}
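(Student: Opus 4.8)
The plan is to prove the theorem by a single dimension-counting argument on an auxiliary homogeneous linear system, following the approach of~\cite{Matousek:2002:LDG}. Write the points as $T=\{\x_1,\ldots,\x_k\}$ with $k\ge n+2$. The central idea is to produce a \emph{signed affine dependence} among the points: scalars $\lambda_1,\ldots,\lambda_k\in\R$, not all zero, satisfying both $\sum_{i=1}^k\lambda_i\x_i=0$ and $\sum_{i=1}^k\lambda_i=0$. Splitting the indices according to the sign of $\lambda_i$ then hands us the desired partition, with the common point of the two convex hulls built directly out of these coefficients.

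First I would set up the linear system. The two conditions above form a system of $n+1$ homogeneous linear equations --- the $n$ coordinate equations coming from $\sum_i\lambda_i\x_i=0$, together with the single scalar equation $\sum_i\lambda_i=0$ --- in the $k$ unknowns $\lambda_1,\ldots,\lambda_k$. Since $k\ge n+2>n+1$, there are strictly more unknowns than equations, so the system has a nontrivial solution $(\lambda_1,\ldots,\lambda_k)\ne(0,\ldots,0)$.

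Next I would partition the indices by sign. Let $P=\{i : \lambda_i>0\}$ and $N=\{i : \lambda_i<0\}$. Because the $\lambda_i$ are not all zero and sum to zero, at least one is strictly positive and at least one is strictly negative, so $P$ and $N$ are nonempty and disjoint. Setting $S=\sum_{i\in P}\lambda_i$, the condition $\sum_i\lambda_i=0$ gives $S=-\sum_{i\in N}\lambda_i>0$. I would then exhibit the common point
\[
p=\sum_{i\in P}\frac{\lambda_i}{S}\,\x_i=\sum_{i\in N}\frac{-\lambda_i}{S}\,\x_i,
\]
where the equality of the two expressions is exactly the rearrangement of $\sum_i\lambda_i\x_i=0$. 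On the left the coefficients $\lambda_i/S$ are positive and sum to $1$, so $p\in\conv(\{\x_i : i\in P\})$; on the right the coefficients $-\lambda_i/S$ are positive and sum to $1$, so $p\in\conv(\{\x_i : i\in N\})$. Finally I would take $T_1=\{\x_i : i\in P\}$ and $T_2=T\setminus T_1$; since $N$ is contained in the complement of $P$ we have $\conv(\{\x_i : i\in N\})\subseteq\conv(T_2)$, whence $p\in\conv(T_1)\cap\conv(T_2)$, and $T_1,T_2$ are disjoint with union $T$.

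The one genuinely clever step is the first one: recognizing that intersecting convex hulls is equivalent to the existence of a signed affine dependence, and then manufacturing that dependence for free from the pigeonhole-style count $k>n+1$. Everything afterward --- the sign split, the normalization by $S$, and the check that both coefficient families form convex combinations --- is routine bookkeeping. The only points requiring minor care are that \emph{both} sign classes are nonempty (which uses both that the solution is nontrivial and that the coefficients sum to zero) and that enlarging $N$ to the complementary block $T_2$ preserves membership in the convex hull.
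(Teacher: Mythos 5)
Your proposal is correct and follows essentially the same argument as the paper's proof (both after Matou\v{s}ek): set up the $n+1$ homogeneous equations $\sum_i\lambda_i\x_i=\mathbf{0}$, $\sum_i\lambda_i=0$ in at least $n+2$ unknowns, take a nontrivial solution, split indices by sign, and normalize to exhibit the common point of the two convex hulls. The only cosmetic differences are that the paper first reduces to the case $k=n+2$ while you run the count on all $k$ points directly, and you are slightly more explicit about the zero coefficients and the nonemptiness of both sign classes; these do not change the substance of the argument.
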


\begin{proof}
It suffices to prove the case $k=n+2$.
Let $T=\{\mathbf{x}_0, \mathbf{x}_1, \ldots, \mathbf{x}_{n+1}\}$ be a set of $n+2$ points in $\mathbb{R}^n$.
There exist coefficients, $a_0$, \ldots, $a_{n+1}$, not all zero, such that 
\begin{equation}\label{eq:radon}
\sum\limits_{i=0}^{n+1} a_i \mathbf{x}_i=\mathbf{0}\quad\mbox{and}\quad\sum\limits_{i=0}^{n+1} a_i=0.
\end{equation} 
Indeed, this is because \eqref{eq:radon} is a collection of $n+1$ homogeneous linear equations with $n+2$ unknowns.
Consider a specific non-trivial solution, $(a_0,\ldots,a_{n+1})$.
Partition the set $S=\{0,\ldots,n+1\}$ so that $S_1$ contains all $i$ with $a_i>0$, and $S_2$ contains all $i$ with $a_i\le 0$; both $S_1$ and $S_2$ are necessarily non-empty.
By \eqref{eq:radon}, there exists a point $\mathbf{v}\in\mathbb{R}^n$ such that 
\[\mathbf{v}=\sum_{i\in S_1}\frac{a_i}{c}\mathbf{x}_i=-\sum_{j\in S_2}\frac{a_j}{c}\mathbf{x}_j,\] 
where $c=\sum\limits_{i\in S_1} a_i=-\sum\limits_{j\in S_2} a_j$.
Let $T_1=\{\mathbf{x}_i\in T~|~i\in S_1\}$, and similarly for $T_2$ (from which it follows that $T=T_1\cup T_2$).
Thus $\mathbf{v}$ is an intersection point of the two convex hulls, since each sum is a representation of $\mathbf{v}$ as a convex combination of points in $T_1$ and $T_2$, respectively.
Therefore, $\mathrm{conv}(T_1)\cap \mathrm{conv}(T_2)\neq\emptyset$.
\end{proof}

In other words, given a set of at least $n+2$ points in $\mathbb{R}^n$, there exists a partition into two parts such that the convex hulls of the two parts intersect.
The points in the intersection of the convex hulls will be relevant throughout the rest of the paper. 
We call them \textit{Radon points} since they are guaranteed by Radon's theorem.
We refer to labeled configurations of points that have a Radon point as \textit{Radon configurations}; see Figure~\ref{fig:TwoRadonR2}.

\begin{definition}
Given a finite set $T\subseteq \mathbb{R}^n$ with disjoint labeled subsets $T_1$ and $T_2$, a \textit{Radon point} is any point $\mathbf{v}\in \mathrm{conv}(T_1)\cap\mathrm{conv}(T_2)$.
We refer to the labeled subsets $T_1$ and $T_2$ for which there is a Radon point as a \textit{Radon configuration}.
\end{definition}

Radon's theorem is a blend of convex geometry and topology.
A topological version of Radon's theorem states that if $f\colon \Delta^{n+1}\to \mathbb{R}^n$ is a continuous map from the $(n+1)$-simplex to $n$-dimensional Euclidean space, then there are two disjoint simplices of $\Delta^{n+1}$ whose images under $f$ intersect; see the paper by Bajm{\'o}czy and B{\'a}r{\'a}ny \cite{Bajmczy1979OnAC}.
We recover Theorem~\ref{thm:Radon} in the case $k=n+2$ by letting $f$ be an affine map sending the $n+2$ vertices of the $(n+1)$-simplex $\Delta^{n+1}$ to the $n+2$ points in $T$.
Radon's theorem is made even more topological by variants of Tverberg's theorem; see Question~\ref{ques:tverberg}.

Some applicable corollaries follow from the proof of Radon's theorem.
For this section, we use the following definition of general position.

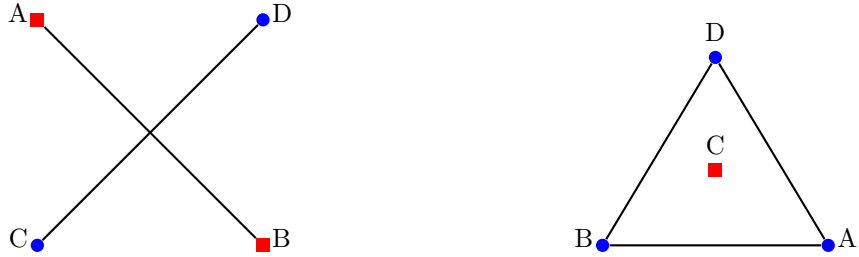
\begin{figure}
\begin{center}  
\begin{tabular}{ c p{8em} c }
\begin{tikzpicture}[scale=0.5]
    \node [circle,fill=blue,inner sep=0pt,minimum size=5pt,label={[right]D}] (A) at (3,3) {};
    \node [rectangle,fill=red,inner sep=0pt,minimum size=5pt,label={[left]A}] (B) at (-3,3) {};
    \node [circle,fill=blue,inner sep=0pt,minimum size=5pt,label={[left]C}] (C) at (-3,-3) {};
    \node [rectangle,fill=red,inner sep=0pt,minimum size=5pt,label={[right]B}] (D) at (3,-3) {};
    \draw[-,thick] (A)--(C);
    \draw[-,thick] (B)--(D);
\end{tikzpicture}
	     & &
	\begin{tikzpicture}[scale=0.5]
    \node [circle,fill=blue,inner sep=0pt,minimum size=5pt,label={[right]A}] (A) at (3,0) {};
    \node [circle,fill=blue,inner sep=0pt,minimum size=5pt,label={[left]B}] (B) at (-3,0) {};
    \node [circle,fill=blue,inner sep=0pt,minimum size=5pt,label={[above]D}] (C) at (0,5) {};
    \node [rectangle,fill=red,inner sep=0pt,minimum size=5pt,label={[above]C}] (D) at (0,2) {};
    \draw[-,thick] (A)--(B)--(C)--(A);
\end{tikzpicture}
\end{tabular}
\caption{Two Radon configurations in $\mathbb{R}^2$.}
\label{fig:TwoRadonR2}
\end{center}
\end{figure}

\begin{definition}\label{def:general}
A finite set $X\subseteq\mathbb{R}^n$ is in \textit{general position} if, for any $k<n$, no $(k+2)$-subset of $X$ lies in a $k$-flat.
\end{definition}

\begin{theorem}[\cite{peterson1972geometry}]\label{thm:radon-iff}
Let $T$ be a set of $n+2$ points in $\mathbb{R}^n$.
Then $T$ is in general position if and only if the partition $\{T_1,T_2\}$ guaranteed by Radon's Theorem is unique.
\end{theorem}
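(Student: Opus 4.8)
The plan is to translate the statement into linear algebra using the homogenization of the points, and then read off both the Radon partitions and the general-position condition from a single matrix. Writing $T=\{\x_0,\ldots,\x_{n+1}\}$, I would form the $(n+1)\times(n+2)$ matrix $M$ whose $i$-th column is $\binom{\x_i}{1}\in\R^{n+1}$. By the same computation as in the proof of Radon's Theorem, the coefficient vectors $\mathbf{a}=(a_0,\ldots,a_{n+1})$ satisfying \eqref{eq:radon} are exactly the kernel $\ker M$; moreover, given any Radon partition $\{T_1,T_2\}$ with common point $\vv\in\conv(T_1)\cap\conv(T_2)$, writing $\vv$ as a convex combination on each side and subtracting produces a kernel vector whose entries are positive on $T_1$ and negative on $T_2$, and conversely the sign pattern of any kernel vector with entries of both signs determines a Radon partition. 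Thus Radon partitions correspond precisely to sign patterns of kernel vectors.

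Next I would identify the general-position hypothesis with a nondegeneracy condition on $M$. For an $(n+2)$-set, Definition~\ref{def:general} is equivalent to the statement that every $(n+1)$-element subset of $T$ is affinely independent: the binding case is $k=n-1$, and affine independence of the $(n+1)$-subsets forces it for all smaller subsets. Affine independence of the subset omitting $\x_i$ is in turn equivalent to nonvanishing of the maximal minor $\det M_{\hat\imath}$ obtained by deleting column $i$. Using the cofactor formula for the kernel of an $(n+1)\times(n+2)$ matrix, namely $a_i=(-1)^i\det M_{\hat\imath}$, I get a clean dictionary: $T$ is in general position if and only if all maximal minors are nonzero, which happens if and only if $\ker M$ is one-dimensional and its generator has no zero entry. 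Indeed, one nonzero minor already forces $\mathrm{rank}(M)=n+1$ and hence $\dim\ker M=1$, and then the displayed formula shows the generator has full support exactly when every minor is nonzero.

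With this dictionary in hand, the forward implication is short. If $T$ is in general position, then $\ker M$ is spanned by a single vector $\mathbf{a}$ with all entries nonzero, so its sign pattern splits $\{0,\ldots,n+1\}$ into two nonempty classes $S_+$ and $S_-$ with no ambiguity. Any Radon partition $\{T_1,T_2\}$ yields, by the correspondence above, a nonzero kernel vector that is therefore a scalar multiple $c\mathbf{a}$ with $c\neq 0$; full support of $\mathbf{a}$ forces every coefficient to be strictly positive on $T_1$ and strictly negative on $T_2$, so $\{T_1,T_2\}=\{S_+,S_-\}$ as an unordered pair. Hence the Radon partition is unique.

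For the converse I would show that failure of general position produces a second Radon partition, and this is where the main difficulty lies. There are two cases. If $\ker M$ is one-dimensional but its generator $\mathbf{a}$ has a zero entry $a_{i_0}=0$, then the sign partition $\{S_+,S_-\}$ of the remaining points still gives an intersection point $\vv$, and since enlarging a set only enlarges its convex hull, both $\{S_+\cup\{i_0\},S_-\}$ and $\{S_+,S_-\cup\{i_0\}\}$ are valid Radon partitions; as $S_+,S_-$ are nonempty and $i_0$ lies in neither, these are distinct, so uniqueness fails. The harder case is $\dim\ker M\geq 2$: here I would take linearly independent kernel vectors $\mathbf{a},\mathbf{b}$ and examine the one-parameter family $\mathbf{a}+t\mathbf{b}$. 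Because $\mathbf{a}$ and $\mathbf{b}$ are not proportional, the coordinates change sign at distinct parameter values, so the family realizes at least two different sign patterns away from the finitely many $t$ at which some coordinate vanishes, each giving a genuine Radon partition with both classes nonempty. Extracting two provably distinct partitions from this family, while avoiding the degenerate parameters where a coordinate is zero, is the step requiring the most care, and it is the crux of the reverse direction.
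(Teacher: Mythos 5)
The paper itself does not prove Theorem~\ref{thm:radon-iff}; it cites \cite{peterson1972geometry} and moves on, so your argument can only be judged on its own terms. Your dictionary is sound: kernel vectors of the homogenized matrix $M$ correspond exactly to affine dependences \eqref{eq:radon}, Radon partitions correspond to (unordered) sign patterns of nonzero kernel vectors, and general position of an $(n+2)$-set is equivalent to nonvanishing of all maximal minors of $M$, i.e.\ to $\ker M$ being spanned by a single vector with no zero entry. Your forward implication is complete and correct; the only slip is cosmetic, namely that the kernel vector built from a Radon partition has entries \emph{nonnegative} on $T_1$ and \emph{nonpositive} on $T_2$ (convex combinations may have zero coefficients), and full support of the generator then upgrades these to strict signs, exactly as you use. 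The first case of the converse, where $\ker M$ is one-dimensional and its generator has a vanishing coordinate $a_{i_0}=0$, is also correct: assigning $i_0$ to either side of the sign partition gives two genuinely distinct Radon partitions.

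The gap is the case $\dim\ker M\ge 2$, which you explicitly leave open, and the route you sketch has two real problems. First, linear independence of $\mathbf{a}$ and $\mathbf{b}$ does not force the coordinates of $\mathbf{a}+t\mathbf{b}$ to change sign at distinct parameter values; it only rules out all coordinates changing sign at the same value. Second, and more seriously, two different sign patterns need not yield two different Radon partitions: the patterns of $\mathbf{v}$ and $-\mathbf{v}$ are different yet determine the same unordered pair, and this is precisely what happens at the two ends $t\to+\infty$ and $t\to-\infty$ of your pencil. So exhibiting ``two sign patterns'' is not enough; you would need two patterns that are neither equal nor opposite. Fortunately this case reduces in one line to the case you already handled: pick any index $i$ with $b_i\neq 0$ and set $t^*=-a_i/b_i$. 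Then $\mathbf{a}+t^*\mathbf{b}$ lies in $\ker M$, is nonzero because $\mathbf{a}$ and $\mathbf{b}$ are linearly independent, and has $i$-th coordinate zero; your first-case argument (assign $i$, and any other zero-coordinate indices, to either side of its sign partition) then produces two distinct Radon partitions. With that reduction supplied, your proof is complete.
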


This uniqueness goes even further.
Not only is the partition unique, general position also implies that the intersection of the convex hulls contains exactly one point. 

\begin{theorem}
[\cite{peterson1972geometry}]
Let $\{T_1,T_2\}$ be the Radon partition of a set of $n+2$ points in general position in $\mathbb{R}^n$.
Then $\mathrm{conv}(T_1)\cap\mathrm{conv}(T_2)$ is a single point.
\end{theorem}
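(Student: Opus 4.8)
The plan is to reuse the linear-algebraic setup from the proof of Radon's theorem and to upgrade it using the general position hypothesis: I will show that the space of coefficient vectors solving the homogeneous system \eqref{eq:radon} is exactly one-dimensional, and that this forces the Radon point to be unique.

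First I would record the key consequence of general position. Consider the $(n+1)\times(n+2)$ matrix $M$ whose $i$-th column is the augmented vector $(\x_i,1)^T\in\R^{n+1}$, so that a coefficient vector $(a_0,\dots,a_{n+1})$ solves \eqref{eq:radon} precisely when it lies in the null space of $M$. I claim $M$ has full rank $n+1$. Indeed, any $n+1$ of the augmented columns, say those indexed by $\x_{i_0},\dots,\x_{i_n}$, are linearly independent if and only if the points $\x_{i_0},\dots,\x_{i_n}$ are affinely independent, i.e.\ do not lie in a common hyperplane (an $(n-1)$-flat). Applying Definition~\ref{def:general} with $k=n-1$ tells us that no $n+1$ of our points lie in an $(n-1)$-flat, so every such collection of columns is independent and $M$ has rank $n+1$. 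Hence the null space has dimension $(n+2)-(n+1)=1$, and the Radon solution $(a_0,\dots,a_{n+1})$ spans it. As an aside I would also observe that no $a_i$ can vanish, since $a_i=0$ would exhibit an affine dependence among the remaining $n+1$ points, again contradicting general position; thus $a_i>0$ for $\x_i\in T_1$ and $a_i<0$ for $\x_i\in T_2$, strictly.

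Next I would take an arbitrary point $\vv\in\conv(T_1)\cap\conv(T_2)$ and show it must coincide with the point produced in the proof of Radon's theorem. Writing $\vv=\sum_{\x_i\in T_1}\lambda_i\x_i=\sum_{\x_j\in T_2}\mu_j\x_j$ as convex combinations, I set $b_i=\lambda_i$ on $T_1$ and $b_j=-\mu_j$ on $T_2$. Then $\sum b_i\x_i=\vv-\vv=\mathbf 0$ and $\sum b_i=1-1=0$, so $(b_i)$ lies in the null space of $M$. By the one-dimensionality established above, $(b_i)=c\,(a_i)$ for some scalar $c$. Matching signs on $T_1$ (where $a_i>0$ and $\lambda_i\ge 0$) forces $c\ge 0$, while $c=0$ is impossible since $\sum_{\x_i\in T_1}\lambda_i=1$; the same normalization then pins down $c=1/A$ with $A=\sum_{\x_i\in T_1}a_i$. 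Consequently every $\lambda_i$ and $\mu_j$ is determined, so $\vv=\sum_{\x_i\in T_1}(a_i/A)\x_i$ is forced, proving the intersection is a single point.

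I expect the main obstacle to be the rank computation in the first step --- specifically, cleanly arguing the equivalence between affine independence of $n+1$ points and linear independence of their augmented $(n+1)$-dimensional columns, and then invoking exactly the right instance ($k=n-1$) of the general position definition. Everything after that is routine one-dimensional linear algebra together with bookkeeping of the convexity (nonnegativity) constraints, which only serve to fix the sign and scale of the spanning vector.
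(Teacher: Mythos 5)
Your proof is correct, and it supplies something the paper itself does not: the paper states this theorem purely as a citation to Peterson~\cite{peterson1972geometry} and gives no proof, so there is no in-paper argument to compare against. Your route is the natural one, extending the linear-algebraic setup of the paper's proof of Radon's theorem: general position (Definition~\ref{def:general} with $k=n-1$) makes every $n+1$ of the augmented columns $(\x_i,1)^T$ linearly independent, so the coefficient matrix has rank $n+1$, the solution space of \eqref{eq:radon} is one-dimensional, no coefficient vanishes, and any point of $\conv(T_1)\cap\conv(T_2)$ produces a null-space vector whose nonnegativity and normalization constraints pin it down uniquely. All three steps are sound, and as a byproduct your argument also yields the uniqueness-of-partition statement (Theorem~\ref{thm:radon-iff}) that the paper likewise only cites.

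One presentational wrinkle is worth fixing. In your first paragraph, the assertion that ``$a_i>0$ for $\x_i\in T_1$ and $a_i<0$ for $\x_i\in T_2$'' does not yet follow from non-vanishing alone: at that stage $(a_i)$ is merely a chosen generator of the one-dimensional null space, while $\{T_1,T_2\}$ is the \emph{given} Radon partition, and nothing yet ties the given partition to the sign pattern of your generator. The repair is one line, and your own second paragraph contains the needed mechanism: a point $\vv\in\conv(T_1)\cap\conv(T_2)$ (nonempty by hypothesis) produces a nonzero null-space vector $(b_i)$ that is nonnegative on $T_1$ and nonpositive on $T_2$; since $(b_i)$ is a nonzero multiple of $(a_i)$, all of whose entries are nonzero, $(b_i)$ is strictly positive on $T_1$ and strictly negative on $T_2$, so after replacing $(a_i)$ by $\pm(a_i)$ the signs match the partition. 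Alternatively, you may simply invoke Theorem~\ref{thm:radon-iff}, stated immediately before this result, to identify the given partition with the sign partition constructed in the proof of Radon's theorem. With that ordering issue patched, the rest --- the rank computation, the equivalence between affine independence of $n+1$ points and linear independence of their augmented vectors, and the scaling argument forcing $c=1/A$ --- is airtight.
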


In the context of support vector machines, in Section~\ref{sec:svm-general-selected} we will show that if our points are in (a slightly stronger notion of) general position, then upon projecting the positive and negative support vectors onto the optimal $(n-1)$-dimensional separating hyperplane, their convex hulls intersect in a single Radon point.

See~\cite{de2020stochastic} for a stochastic version of Radon's theorem (and the more general Tverberg theorem), along with some applications to data classification using logistic regression.

%%%%%%%%%%%%% Section %%%%%%%%%%%%%%%%%%

\begin{figure}[h]
\begin{center}
\begin{tabular}{ p{18em} c }
\begin{tikzpicture}[scale=0.6]
    \coordinate [] (A) at (-3,-3);
    \coordinate [] (B) at (-3,3);
    \coordinate [] (C) at (-1,4);
    \coordinate [] (D) at (-1,-2);
    
    \coordinate [] (H) at (1,-3);
    \coordinate [] (G) at (1,3);
    \coordinate [] (F) at (3,4);
    \coordinate [] (E) at (3,-2);

    \foreach \i in {A,B,C,D}
    \draw[-, fill=blue!30, opacity=.7] (A)--(B)--(C)--(D)--cycle;
    \draw[-, thick, fill=red!30, opacity=.5] (E)--(F)--(G)--(H)--cycle;
    
    \node [circle,fill=blue,inner sep=0pt,minimum size=3pt] (G) at (-2,0) {};
    \node [rectangle,fill=red,inner sep=0pt,minimum size=3pt] (G) at (1.5,-1) {};
    \node [rectangle,fill=red,inner sep=0pt,minimum size=3pt] (G) at (2,1.5) {};
    \node [rectangle,fill=red,inner sep=0pt,minimum size=3pt] (G) at (2.75,0.5) {};
\end{tikzpicture}
&
\begin{tikzpicture}[scale=0.6]
    \coordinate [] (A) at (-3,-3);
    \coordinate [] (B) at (-3,3);
    \coordinate [] (C) at (-1,4);
    \coordinate [] (D) at (-1,-2);
    
    \coordinate [] (H) at (1,-3);
    \coordinate [] (G) at (1,3);
    \coordinate [] (F) at (3,4);
    \coordinate [] (E) at (3,-2);

    \foreach \i in {A,B,C,D}
    \draw[-, fill=blue!30, opacity=.7] (A)--(B)--(C)--(D)--cycle;
    \draw[-, thick, fill=red!30, opacity=.5] (E)--(F)--(G)--(H)--cycle;
    
    \node [circle,fill=blue,inner sep=0pt,minimum size=3pt] (G) at (-2.5,1.5) {};
    \node [circle,fill=blue,inner sep=0pt,minimum size=3pt] (G) at (-1.5,-1) {};
    \node [rectangle,fill=red,inner sep=0pt,minimum size=3pt] (G) at (1.5,-1.5) {};
    \node [rectangle,fill=red,inner sep=0pt,minimum size=3pt] (G) at (2.75,0.5) {};
    \end{tikzpicture}\\
    
 \begin{tikzpicture}[scale=0.4]
    \node [rectangle,fill=red,inner sep=0pt,minimum size=5pt] (A) at (3,0) {};
    \node [rectangle,fill=red,inner sep=0pt,minimum size=5pt] (B) at (-3,0) {};
    \node [rectangle,fill=red,inner sep=0pt,minimum size=5pt] (C) at (0,5) {};
    \node [circle,fill=blue,inner sep=0pt,minimum size=5pt] (D) at (0,2.25) {};
    \draw[-,thick] (A)--(B)--(C)--(A);
\end{tikzpicture}
&
\begin{tikzpicture}[scale=0.4]
    \node [rectangle,fill=red,inner sep=0pt,minimum size=5pt] (A) at (3,3) {};
    \node [circle,fill=blue,inner sep=0pt,minimum size=5pt] (B) at (-3,3) {};
    \node [rectangle,fill=red,inner sep=0pt,minimum size=5pt] (C) at (-3,-3) {};
    \node [circle,fill=blue,inner sep=0pt,minimum size=5pt] (D) at (3,-3) {};
    \draw[-,thick] (A)--(C);
    \draw[-,thick] (B)--(D);
\end{tikzpicture}
\end{tabular}
\end{center}
\vspace{5mm}
\begin{center}
\begin{tabular}{ p{18em} c }
 \begin{tikzpicture}[scale=0.6]
     \coordinate [] (A) at (-3,-3);
     \coordinate [] (B) at (-3,3);
     \coordinate [] (C) at (-1,4);
     \coordinate [] (D) at (-1,-2);
    
     \coordinate [] (H) at (1,-3);
     \coordinate [] (G) at (1,3);
     \coordinate [] (F) at (3,4);
     \coordinate [] (E) at (3,-2);

     \foreach \i in {A,B,C,D}
     \draw[-, fill=blue!30, opacity=.7] (A)--(B)--(C)--(D)--cycle;
     \draw[-, thick, fill=red!30, opacity=.5] (E)--(F)--(G)--(H)--cycle;
    
     \node [circle,fill=blue,inner sep=0pt,minimum size=3pt] (G) at (-2.5,-2) {};
     \node [circle,fill=blue,inner sep=0pt,minimum size=3pt] (G) at (-1.5,2) {};
     \node [rectangle,fill=red,inner sep=0pt,minimum size=3pt] (G) at (2,0) {};

 \end{tikzpicture}
 &
 \begin{tikzpicture}[scale=0.6]
    \coordinate [] (A) at (-3,-3);
    \coordinate [] (B) at (-3,3);
    \coordinate [] (C) at (-1,4);     \coordinate [] (D) at (-1,-2);
    
    \coordinate [] (H) at (1,-3);
    \coordinate [] (G) at (1,3);
    \coordinate [] (F) at (3,4);
    \coordinate [] (E) at (3,-2);

    \foreach \i in {A,B,C,D}
        \draw[-, fill=blue!30, opacity=.7] (A)--(B)--(C)--(D)--cycle;
        \draw[-, thick, fill=red!30, opacity=.5] (E)--(F)--(G)--(H)--cycle;
    
    \node [circle,fill=blue,inner sep=0pt,minimum size=3pt] (G) at (-2,0) {};
    \node [rectangle,fill=red,inner sep=0pt,minimum size=3pt] (G) at (2,0) {};
    \end{tikzpicture}\\
     \\
  \begin{tikzpicture}[scale=0.4]
    \node [circle,fill=blue,inner sep=0pt,minimum size=5pt] (A) at (-3,0) {};
    \node [rectangle,fill=red,inner sep=0pt,minimum size=5pt] (B) at (0,0) {};
    \node [circle,fill=blue,inner sep=0pt,minimum size=5pt] (C) at (3,0) {};
    \draw[-,thick] (A)--(B)--(C);
 \end{tikzpicture}
 &
 \begin{tikzpicture}[scale=0.4]
    \node [circle,fill=purple,inner sep=0pt,minimum size=5pt] (A) at (0,0) {};
\end{tikzpicture}
\end{tabular}
\end{center}
\caption{Four possible support vector configurations in $\mathbb{R}^3$ (the figures with planes), each drawn above its corresponding projected Radon configuration in the separating hyperplane (the figures with just lines and points). 
As we will see in Section~\ref{sec:svm-general-selected}, when points are in strong general position, there can be at most $n+1$ support vectors. 
Thus in $\mathbb{R}^3$, we can have configurations with 2, 3, or 4 support vectors.
See Table~\ref{table:simulations3} in Section~\ref{sec:experiments} for relative frequencies of these configurations.
}
\label{fig:3d-sv-configurations}
\end{figure}
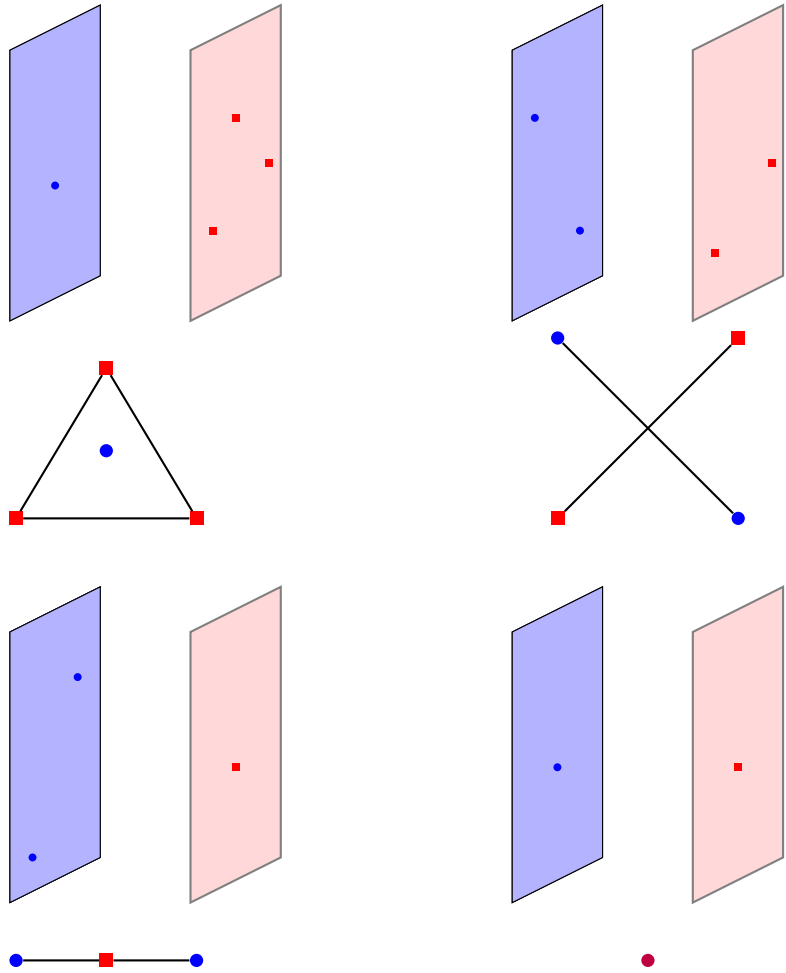

%%%%%%%%%%%%% Section %%%%%%%%%%%%%%%%%%

\section{Radon's theorem and SVMs}
\label{sec:RadonsThmandSVM}

In this paper, we show that Radon's theorem can be used as a tool to identify and classify support vector configurations.
Note that the support vectors are already partitioned into two different classes, the positive class and the negative class.
As such, Radon's theorem has consequences for SVM configurations, and for the projections of support vectors onto the separating hyperplane. 
For example, in Theorem~\ref{thm:svm-hulls-intersect} we will show that a separating hyperplane is optimal if and only if the projections of the convex hulls of the support vectors from the two classes onto this hyperplane intersect.
In $\mathbb{R}^3$, most support vector configurations will look like the figures found in Figure~\ref{fig:3d-sv-configurations}, and when we project those configurations onto the separating hyperplane we get a Radon configuration and a Radon point.
To say more about the properties of the Radon points (such as uniqueness), we will need some additional general position concepts, which will be discussed in Section~\ref{sec:svm-general-selected}.

Similar to the definition of a Radon point in Section~\ref{sec:radon}, we define a Radon point in the SVM setting.

\begin{definition}\label{def:SVM_Radon_pt}
Suppose $X_+$ and $X_-$ are linearly separable finite sets of points in $\mathbb{R}^n$, with parallel separating hyperplanes $H_+$ and $H_-$ such that $C_+=X_+\cap H_+$ and $C_-=X_-\cap H_-$ are non-empty.
Let $H$ be the parallel hyperplane midway between $H_+$ and $H_-$, and define $\rho\colon \mathbb{R}^n\to H$ to be orthogonal projection onto $H$.
A \textit{Radon point} is a point $\mathbf{v}\in H$ such that $\mathbf{v}\in \rho(\mathrm{conv}(C_+))\cap \rho(\mathrm{conv}(C_-))$.
\end{definition}

\begin{figure}[h!]
\centering
\begin{subfigure}{.5\textwidth}
  \centering
  \includegraphics[width=.6\linewidth]{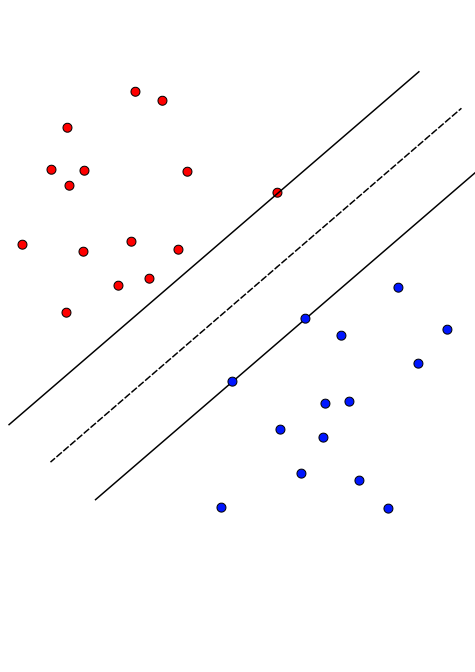}
  \caption{Suboptimal separating hyperplane}
  \label{fig:incorrectsv}
\end{subfigure}%
\begin{subfigure}{.5\textwidth}
  \centering
  \includegraphics[width=.6\linewidth]{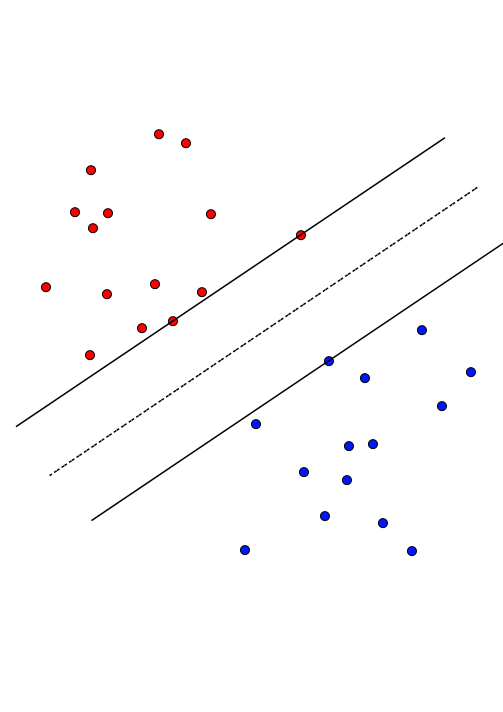}
  \caption{Optimal separating hyperplane}
  \label{fig:correctsv}
\end{subfigure}
\caption{The same dataset with two different separating hyperplanes.
Note that (A) does not contain a Radon point in the projection of the convex hulls whereas (B) does.
The margin of separation is larger in (B) than in (A).}
\label{fig:svcomparison}
\end{figure}

An example of a Radon point can be found in Figure~\ref{fig:correctsv}. 
In this case, $C_+$ is the two red points on the upper separating hyperplane while $C_-$ is the single blue point on the lower separating hyperplane. 
The Radon point is the intersection of the projection of the convex hull of two red points (a line segment) and the projection of the single blue point onto the middle separating hyperplane. 
In this example, the Radon point coincides with the projection of the blue point.
By contrast, in Figure~\ref{fig:incorrectsv} there is no Radon point.

Now, we can say something about the existence of a Radon point, and its relationship to whether a separating hyperplane is optimal.

\begin{theorem}
\label{thm:svm-hulls-intersect}
If $X\subset\mathbb{R}^n$ is a finite set of linearly separable labeled points, then the orthogonal projections of the convex hulls of the positive and negative support vectors onto the optimal separating hyperplane intersect in at least one Radon point. 
Conversely, if the intersection of the orthogonal projections of the convex hulls of the positive and negative support vectors contains a Radon point, then the separating hyperplane is optimal.
\end{theorem}

\begin{proof}
We first prove the forward direction.
By the KKT conditions in Section~\ref{ssec:solving} we have (i) $\mathbf{w}=\sum_j y_j\alpha_j\mathbf{x}_j$ and (ii) $\sum_j \alpha_j y_j=0$, where $\mathbf{w}$ is the normal vector to the optimal separating hyperplane.
Let $S_+=\{j~|~y_j=1\}$ be the set of indices for the positive class, and let $S_-=\{j~|~y_j=-1\}$ be the set of indices for the negative class.
Reorganizing (ii), we may define $c:=\sum_{j\in S_+}\alpha_j=\sum_{j\in S_-}\alpha_j$.
Since the intersection patterns of projections of the convex hulls are unchanged by translation, we may assume for the purposes of projection that $b=0$.
Let $\rho\colon\mathbb{R}^n\to\mathbb{R}^n$ be the orthogonal projection onto the orthogonal complement of $\mathbf{w}$, that is, the orthogonal projection onto the separating hyperplane.
Then (i) gives that $\mathbf{0}=\rho(\mathbf{w})=\rho(\sum_j y_j\alpha_j\mathbf{x}_j)$, which we may reorganize using the linearity of $\rho$ to get
\[\rho\left(\sum_{j\in S_+} \alpha_j\mathbf{x}_j\right) = \rho\left(\sum_{j\in S_-} \alpha_j\mathbf{x}_j\right).\]
We may rescale by $\frac{1}{c}$ to obtain the same equality for convex combinations
\[\rho\left(\sum_{j\in S_+} \frac{\alpha_j}{c}\mathbf{x}_j\right) = \rho\left(\sum_{j\in S_-} \frac{\alpha_j}{c}\mathbf{x}_j\right),\]
where we note that these combinations are convex since $1=\sum_{j\in S_+}\frac{\alpha_j}{c}=\sum_{j\in S_-}\frac{\alpha_j}{c}$.
Hence we have shown that the projections of the convex hulls of the positive and negative support vectors onto the separating hyperplane intersect.

Conversely, let $H_+$ and $H_-$ be parallel hyperplanes, where $C_+=H_+\cap X_+$ and $C_-=H_-\cap X_-$ are non-empty, and where $H_+$ and $H_-$ separate the remaining points in $X_+$ and $X_-$. 
Let $H$ be the parallel hyperplane midway between $H_+$ and $H_-$ and let $\rho$ be the orthogonal projection map onto $H$.
Suppose there exists a Radon point $\mathbf{r}\in H$ such that $\mathbf{r}\in \rho(\text{conv}(C_+))\cap \rho(\text{conv}(C_-))$.
We show $H$ is optimal, meaning that no other separating hyperplanes provide a larger margin of separation.

Let $\mathbf{r}_+\in \text{conv}(C_+)$ and $\mathbf{r}_-\in \text{conv}(C_-)$ be such that $\rho(\mathbf{r}_+)=\mathbf{r}=\rho(\mathbf{r}_-)$.
Note that the distance between $H_+$ and $H_-$ is $\|\mathbf{r}_+-\mathbf{r}_-\|$.
We want to show that for any separating hyperplanes $H'_+$ and $H'_-$, the margin between $H'_+$ and $H'_-$ is at most $\|\mathbf{r}_+-\mathbf{r}_-\|$.

Since $\mathbf{r}_+\in \text{conv}(C_+)$, we can write $\mathbf{r}_+=\sum_{i}\alpha_i \mathbf{c}_i$ where $\mathbf{c}_i\in C_+$, $\alpha_i\geq 0$, and $\sum_i \alpha_i=1$. 

Now, let $\mathbf{w}$ be any unit vector in $\mathbb{R}^n$. 
Then, 
\begin{align*}
    \min\limits_{\mathbf{x}\in X_+}(\mathbf{w}\cdot \mathbf{x})
    &=\left(\sum\limits_{i}\alpha_i\right)\min\limits_{\mathbf{x}\in X_+}(\mathbf{w}\cdot \mathbf{x}) &\text{since } \sum_i \alpha_i=1\\
    &=\sum\limits_{i}\left(\alpha_i\min\limits_{\mathbf{x}\in X_+}(\mathbf{w}\cdot \mathbf{x})\right)\\
    &\leq \sum\limits_i \alpha_i \mathbf{w} \cdot \mathbf{c}_i &\text{since } \mathbf{c}_i\in C_+\subseteq X_+\\
    &= \mathbf{w} \cdot \sum\limits_i \alpha_i \mathbf{c}_i \\
    &= \mathbf{w} \cdot \mathbf{r}_+.
\end{align*}

Similarly, we have $\max\limits_{\mathbf{x}\in X_-}(\mathbf{w}\cdot \mathbf{x})\geq \mathbf{w} \cdot \mathbf{r}_-$.

Finally, consider all unit vectors $\mathbf{w}$ such that some hyperplane normal to $\mathbf{w}$ separates $X_+$ and $X_-$, with $\mathbf{w}\cdot\mathbf{x}>0$ for all $\mathbf{x}\in X_+$.
Let $q_\mathbf{w}$ be the orthogonal projection map onto the line spanned by $\mathbf{w}$. 
Then, for any two separating hyperplanes $H'_+$ and $H'_-$ orthogonal to $\mathbf{w}$, the distance between them is
\begin{align*}
    d(H'_+,H'_-) &\leq d(q_\mathbf{w}(X_+),q_\mathbf{w}(X_-)) \\   
    &= \min\limits_{\mathbf{x}\in X_+}(\mathbf{w} \cdot \mathbf{x}) - \max\limits_{\mathbf{x}\in X_-}(\mathbf{w}\cdot \mathbf{x})\\%\text{since }\mathbf{w}\cdot\mathbf{x}>0\text{ for all }\mathbf{x}\in X_+\\
    &\leq (\mathbf{w} \cdot \mathbf{r}_+) - (\mathbf{w} \cdot \mathbf{r}_-)\\% & \text{by above}\\
    &= \mathbf{w}\cdot(\mathbf{r}_+ - \mathbf{r}_-)\\
    &\leq \|\mathbf{r}_+ - \mathbf{r}_-\| & \text{since }\|\mathbf{w}\|=1.
\end{align*}
Thus, the distance between any two separating hyperplanes is at most $\|\mathbf{r}_+ - \mathbf{r}_-\|$, which is achieved when $\mathbf{w}$ points in the direction of $\mathbf{r}_+ - \mathbf{r}_-$.
This implies that $H_+$ and $H_-$ are optimal.
\end{proof}

Thus if we have a ``claimed'' support vector configuration where the projection of the convex hulls does not result in at least one Radon point, then we know the choice of SVM separating hyperplane was incorrect (e.g.\ as might arise in the case of inaccuracies as a result of rounding errors).
Consider Figure~\ref{fig:svcomparison}. 
The two datasets in (A) and (B) are the same, but (A) has support vectors where if one projects their convex hulls, they do not intersect. 
By Theorem~\ref{thm:svm-hulls-intersect}, these must not be the correct support vectors.
Indeed, (B) does have a Radon point in the projection of the convex hulls of the support vectors, and the margin between the two datasets is wider. 
By Theorem~\ref{thm:svm-hulls-intersect}, (B) has the correct optimal separating hyperplane.

To get stronger results, we will need a stronger notion of general position, which we introduce in Section~\ref{sec:svm-general-selected}.
These results include showing that the projected convex hulls of the optimal support vectors intersect in a single Radon point and no more, and that the identities of the support vectors are stable under small perturbations of the data points. 

%\note{Note that Theorem~\ref{thm:svm-hulls-intersect} implies that Figure~1 of  (\footnote{\url{https://www.microsoft.com/en-us/research/wp-content/uploads/2016/02/tr-98-14.pdf}}) is not yet optimized since the projections of convex hulls of the support vectors don't intersect.
%Also, see slide 7 of (\footnote{\url{http://web.mit.edu/6.034/wwwbob/svm-notes-long-08.pdf}}) for an incorrect drawing where the support vectors contradict Theorem~\ref{thm:svm-hulls-intersect}!}

%%%%%%%%%%%%% Section %%%%%%%%%%%%%%

\section{SVMs for points in general position}
\label{sec:svm-general-selected}

There are a wide variety of notions of general position; in Definition~\ref{def:general} we gave only one such notion.
We begin in Section~\ref{ssec:SGP-def} by defining a stronger notion of general position that will be useful in the context of support vector machines.
In Section~\ref{ssec:SGP-generic} we show that strong general position is a generic property.
Finally, in Section~\ref{ssec:SGP-SVM} we show that for points in strong general position, a sufficiently small perturbation cannot change which data points are labeled as support vectors, and we give the basic properties of SVMs that hold in this slightly more restrictive setting of strong general position.

\subsection{The definition of strong general position}
\label{ssec:SGP-def}

\begin{definition}\label{def:sgp2}
A finite set of points $X\subseteq\mathbb{R}^n$ is in \textit{strong general position} if all three of the following conditions are met.
\begin{enumerate}
    \item[$(i)$] For integer $0\leq k \leq n-1$, no $k$-flat contains more than $k+1$ points of $X$.
    \item[$(ii)$] No disjoint $k$-flats and $\ell$-flats constructed as the affine spans of sets of points in $X$ contain parallel vectors.
    \item[$(iii)$] Let $\mathbf{u}_0,\ldots,\mathbf{u}_{k}\subseteq X$ generate a $k$-flat $U$, and let $\mathbf{v}_0,\ldots,\mathbf{v}_{\ell}\subseteq X$ generate a disjoint $\ell$-flat $V$.
    Let $\mathbf{w}$ be the vector with head in $U$ and tail in $V$ whose length is equal to the distance between $U$ and $V$.
    We require that the hyperplane normal to $\mathbf{w}$ that contains $U$ contains no points in $X$ other than $\mathbf{u}_0,\ldots,\mathbf{u}_{k}$.
\end{enumerate}
\end{definition}

We note that a necessary and sufficient condition for $\mathbf{w}$ to minimize the distance between $U$ and $V$ is that the vector $\mathbf{w}$ is orthogonal to both $U$ and $V$.
This explains why there indeed is a hyperplane normal to $\mathbf{w}$ which contains $U$.
By symmetry, $(iii)$ also allows us to conclude that the hyperplane normal to $\mathbf{w}$ that contains $V$ furthermore contains no points in $X$ other than $\mathbf{v}_0,\ldots,\mathbf{v}_{\ell}$.
The fact that the vector $\mathbf{w}$ in $(iii)$ is unique follows from $(ii)$, which implies that $U$ is not parallel to a subspace of $V$, and similarly $V$ is not parallel to a subspace of $U$.
See Figure~\ref{fig:degenerate_cases} for two examples of points that are not in strong general position.

\begin{figure}[h]
\begin{tabular}{p{18em} c}
\begin{tikzpicture}[scale=0.5]
    \node [circle,fill=blue,inner sep=0pt,minimum size=5pt, label={$\mathbf{p}_0$}] (B) at (-3,3) {};
    \node [circle,fill=blue,inner sep=0pt,minimum size=5pt, label={$\mathbf{p}_1$}] (C) at (-3,-3) {};
    \node [rectangle,fill=red,inner sep=0pt,minimum size=5pt, label={$\mathbf{n}_1$}] (D) at (3,-2) {};
    \node [rectangle,fill=red,inner sep=0pt,minimum size=5pt, label={$\mathbf{n}_0$}] (E) at (3,1) {};
    \draw[-,thick] (0,4)--(0,-4);

\end{tikzpicture}
&
 \begin{tikzpicture}[scale=0.6]
    \coordinate [] (A) at (-3,-3);
    \coordinate [] (B) at (-3,3);
    \coordinate [] (C) at (-1,4);     
    \coordinate [] (D) at (-1,-2);
    
    \coordinate [] (H) at (1,-3);
    \coordinate [] (G) at (1,3);
    \coordinate [] (F) at (3,4);
    \coordinate [] (E) at (3,-2);

    \foreach \i in {A,B,C,D}
        \draw[-, fill=blue!30, opacity=.7] (A)--(B)--(C)--(D)--cycle;
        \draw[-, thick, fill=red!30, opacity=.5] (E)--(F)--(G)--(H)--cycle;
    
    \node [circle,fill=blue,inner sep=0pt,minimum size=3pt, label={$\mathbf{p}_0$}] (G) at (-2,0) {};
    \node [rectangle,fill=red,inner sep=0pt,minimum size=3pt, label={$\mathbf{n}_0$}] (G) at (2,2) {};
    \node [rectangle,fill=red,inner sep=0pt,minimum size=3pt, label={$\mathbf{n}_1$}] (G) at (2,-1) {};
    \node [rectangle,fill=red,inner sep=0pt,minimum size=3pt, label={$\mathbf{n}_2$}] (G) at (2.5,0) {};
    \end{tikzpicture}
\end{tabular}
    \caption{Two examples of point sets $X\subseteq \mathbb{R}^n$ that are not in strong general position.
    (The point sets are unlabeled, and the colors are for illustration.)
    On the left, in $\mathbb{R}^2$, the vector $\protect\overrightarrow{\mathbf{p}_0\mathbf{p}_1}$ is parallel to $\protect\overrightarrow{\mathbf{n}_0\mathbf{n}_1}$, violating condition $(ii)$ of strong general position.
    On the right, we provide an example in $\mathbb{R}^3$.
    If $\mathbf{w}$ is the shortest vector between $\mathbf{p}_0$ and the line between $\mathbf{n}_0$ and $\mathbf{n}_1$, then the hyperplane perpendicular to $\mathbf{w}$ that contains $\mathbf{n}_0,\mathbf{n}_1$ also contains $\mathbf{n}_2$, violating condition $(iii)$.
    In other words, the orthogonal projection of $\mathbf{p}_0$ onto the affine span of $\mathbf{n}_0,\mathbf{n}_1,\mathbf{n}_2$ lies on the line segment between $\mathbf{n}_0$ and $\mathbf{n}_1$, which differs from the top left in Figure~\ref{fig:3d-sv-configurations} where instead the projection of $\mathbf{p}_0$ is in the interior of the convex hull of $\mathbf{n}_0$, $\mathbf{n}_1$, and $\mathbf{n}_2$.}
    \label{fig:degenerate_cases}
\end{figure}
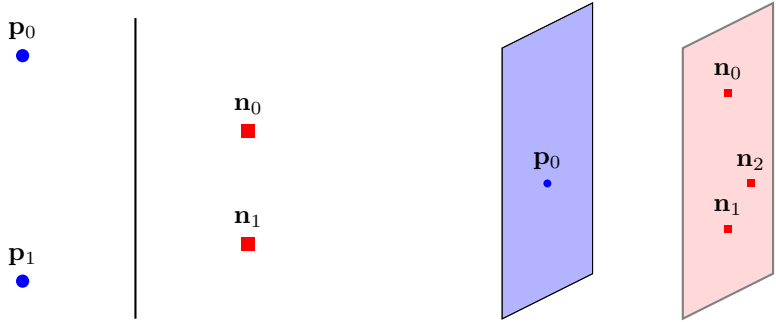

%%% Subsection %%%

\subsection{Strong general position is a generic property}
\label{ssec:SGP-generic}

We want to show that strong general position is a generic property, meaning if we take a random set of linearly separable points, they satisfy the conditions of strong general position with probability 1. 
To show that these configurations are generic, we look to the Zariski topology. 
The Zariski topology is defined on affine space over a field $\mathbb{F}$ where the closed sets are the algebraic subsets in the affine space~\cite{dummit2004abstract}.
For our purposes, we let $\mathbb{F}=\mathbb{R}$ and look at the Zariski topology on $\mathbb{R}^n$. 
The Zariski-closed sets are the affine varieties of $\mathbb{R}^n$, i.e.\ the vanishing set $V(A)$ of all of the polynomials in some subset $A\subset\mathbb{R}[x_1,\ldots,x_n]$. 
This means we obtain the Zariski-open sets by taking the complements $\mathbb{R}^n-V(A)$ of algebraic varieties.
Note, the non-empty Zariski-open sets in $\mathbb{R}^n$ are open and dense in the regular Euclidean topology (see~\cite[Proposition~1, Page 148]{blum1998complexity}, for example).
% https://link.springer.com/content/pdf/bbm%3A978-1-4612-0701-6%2F1%2F1
% https://link.springer.com/book/10.1007/978-1-4612-0701-6
%\url{https://math.ucr.edu/~res/math205A-2014/zariski-topology.pdf}
%\note{Theorem 3.1.6 at \url{https://www.math.tamu.edu/~frank.sottile/teaching/19.1/Chapters/Ch3.pdf}.}
Thus to show strong general position is a generic property, we will show that the configurations of $m$ points in $\mathbb{R}^n$ that do not satisfy strong general position live in a Zariski-closed subset of $\mathbb{R}^{mn}$.

While the above conditions $(i)$--$(iii)$ for strong general position in Definition~\ref{def:sgp2} are intuitive, for ease of showing genericity, we introduce two alternate axioms for $(i)$ and $(ii)$, for a finite set of points $X\subseteq\mathbb{R}^n$.
\begin{enumerate}
    \item[$(i')$] For every set of distinct points $\mathbf{u}_0, \mathbf{u}_1, \ldots, \mathbf{u}_k\in X$ with $k\leq n$, the vectors $\mathbf{u}_1-\mathbf{u}_0, \ldots, \mathbf{u}_k-\mathbf{u}_0$ are linearly independent.
    \item[$(ii')$] For every set of distinct points $\mathbf{u}_0, \ldots, \mathbf{u}_k, \mathbf{v}_0, \ldots, \mathbf{v}_\ell \in X$ with $k+\ell\leq n$, the vectors $\mathbf{u}_1-\mathbf{u}_0, \ldots, \mathbf{u}_k-\mathbf{u}_0,\mathbf{v}_1-\mathbf{v}_0, \ldots, \mathbf{v}_\ell-\mathbf{v}_0$ are linearly independent.
\end{enumerate}

Note that $(i')$ can be seen as a special case of $(ii')$ when $\ell=0$.

We first show that these axioms are equivalent to their respective counterparts.

\begin{theorem}\label{thm:axiom_equiv}
Condition $(i')$ is equivalent to condition $(i)$. 
Condition $(ii')$ implies condition $(ii)$. 
Finally, conditions $(i)$ and $(ii)$ imply $(ii')$.
\end{theorem}

\begin{proof}
\textit{Condition $(i')$ is equivalent to condition $(i)$.}
By the correspondence between linear and affine independence, condition $(i')$ is equivalent to the statement that for $k\le n$, any set of $k$ points in $X$ is affinely independent.
This statement is also equivalent to $(i)$, which is easiest to see if one reindexes in order to rephrase $(i)$ as: ``For integer $1\le k\le n$, no $(k-1)$-flat contains more than $k$ points of $X$.''

\textit{Condition $(ii')$ implies condition $(ii)$.}
Suppose condition $(ii')$ holds.
Furthermore, suppose $U$ is a $k$-flat in $\mathbb{R}^n$ generated as the affine span of $\mathbf{u}_0, \ldots, \mathbf{u}_k$, suppose $V$ is an $\ell$-flat in $\mathbb{R}^n$ affinely generated by  $\mathbf{v}_0,\ldots, \mathbf{v}_\ell$, and suppose $U$ and $V$ are disjoint.
In order to show that $(ii)$ holds, we need to show that $U$ and $V$ do not contain parallel vectors.

We first argue that $k+\ell<n.$ 
Suppose not: suppose $k+\ell\geq n$.
Remove arbitrary points from $U$ and $V$ such that $k+\ell=n$. 
Choose some vector $\mathbf{w}$ such that $\mathbf{w}$ is a shortest vector between $U$ and $V$. 
By applying Lemma~\ref{lem:min_Euclidean_norm} to the affine space $U-V=\{\mathbf{u}-\mathbf{v}~|~\mathbf{u}\in U,\ \mathbf{v}\in V\}$, we are guaranteed that such a $\mathbf{w}$ exists.
Further, $\mathbf{w}$ is perpendicular to each of the $k+\ell=n$ vectors.
But, $(ii')$ states that these $n$ vectors are linearly independent.
This implies that $\mathbf{w}=\mathbf{0}$ and thus that $U$ and $V$ are not disjoint, a contradiction.
Therefore, $k+\ell<n$. 

Furthermore, the fact that $U$ and $V$ are $k$- and $\ell$-flats generated by $k+1$ and $\ell+1$ points, respectively, implies that the points are distinct.
As a consequence, we have satisfied the hypotheses for $(ii'),$ and we can conclude that $\mathbf{u}_1-\mathbf{u}_0, \ldots, \mathbf{u}_k-\mathbf{u}_0,\mathbf{v}_1-\mathbf{v}_0, \ldots, \mathbf{v}_\ell-\mathbf{v}_0$ are linearly independent.

Now suppose $U$ and $V$ do contain parallel non-zero vectors: that is, suppose $\mathbf{u}=\lambda\mathbf{v},$ where $\mathbf{u}$ is a non-zero displacement vector defined by distinct points in $U$, and $\mathbf{v}$ is a non-zero displacement vector defined by distinct points in $V$.
Then, by definition of $U$ and $V,$ we can write $\mathbf{u}=\sum_{i=0}^k\alpha_i(\mathbf{u}_i-\mathbf{u}_0)$ with $\sum_{i=0}^k\alpha_i=1$, and we can write  $\mathbf{v}=\sum_{j=0}^{\ell}\beta_j(\mathbf{v}_j-\mathbf{v}_0)$ with $\sum_{j=0}^{\ell}\beta_j=1.$ But then $\mathbf{0}=\sum_{i=0}^k\alpha_i(\mathbf{u}_i-\mathbf{u}_0)-\sum_{j=0}^{\ell}\lambda\beta_j(\mathbf{v}_j-\mathbf{v}_0)$, and certainly not all coefficients are zero.
This contradicts the the linear independence of $\mathbf{u}_1-\mathbf{u}_0,\ldots,\mathbf{u}_k-\mathbf{u}_0,\mathbf{v}_1-\mathbf{v}_0,\ldots,\mathbf{v}_{\ell}-\mathbf{v}_0$ guaranteed by $(ii')$.
Hence, $(ii')$ implies $(ii)$.

\textit{Conditions $(i)$ and $(ii)$ imply $(ii')$.}
Assume that conditions $(i)$ and $(ii)$ hold. 
As in condition $(ii')$, consider distinct points $\mathbf{u}_0, \ldots, \mathbf{u}_k$ and $\mathbf{v}_0, \ldots, \mathbf{v}_\ell$ such that $k+\ell\leq n$.
Again, let $U$ and $V$ be the $k$-flat and $\ell$-flat defined as the affine spans of the points $\mathbf{u}_0, \ldots, \mathbf{u}_k$ and $\mathbf{v}_0,\ldots, \mathbf{v}_\ell$, respectively.
We show that $\mathbf{u}_1-\mathbf{u}_0, \ldots, \mathbf{u}_k-\mathbf{u}_0,\mathbf{v}_1-\mathbf{v}_0, \ldots, \mathbf{v}_\ell-\mathbf{v}_0$ are linearly independent.

Since $(i)$ implies $(i')$, we have that the vectors $\mathbf{u}_1-\mathbf{u}_0, \ldots, \mathbf{u}_k-\mathbf{u}_0$ are linearly independent.
Similarly, the vectors $\mathbf{v}_1-\mathbf{v}_0, \ldots, \mathbf{v}_\ell-\mathbf{v}_0$ are linearly independent.

In search of contradiction, suppose there exist coefficients $c_0,\ldots, c_k$ and $c'_0,\ldots, c'_\ell$ such that $\mathbf{0}=\sum\limits_{i=1}^{k}c_i(\mathbf{u}_i-\mathbf{u}_0)+\sum\limits_{j=1}^{\ell}c'_j(\mathbf{v}_j-\mathbf{v}_0)$,
where at least one $c_i$ and at least one $c'_j$ are non-zero (note that we can choose one of each to be non-zero, since if not, it would contradict the linear independence of the remaining set of vectors).
But then $U$ and $V$ contain parallel vectors, a contradiction to $(ii)$.
Therefore, $\mathbf{u}_1-\mathbf{u}_0, \ldots, \mathbf{u}_k-\mathbf{u}_0,\mathbf{v}_1-\mathbf{v}_0, \ldots, \mathbf{v}_\ell-\mathbf{v}_0$ are linearly independent, and we have shown that $(i)$ and $(ii)$ imply $(ii')$.
\end{proof}

Using these new conditions, we can show that all three conditions are satisfied on a non-empty Zariski subset of $\mathbb{R}^{mn}$.
This will allow us to conclude that the support vectors are stable.

\begin{theorem}
\label{thm:zariski}
For $X\subseteq \mathbb{R}^n$ with $|X|=m<\infty$,
the conditions $(i)$, $(ii)$, and $(iii)$ in the definition of strong general position are satisfied on a non-empty Zariski-open subset of $\mathbb{R}^{mn}$.
\end{theorem}

\begin{proof}
By Theorem~\ref{thm:axiom_equiv}, we instead consider conditions $(i')$, $(ii')$, and $(iii)$.

First, we show that $(i')$ and $(ii')$ are satisfied on a Zariski-open subset of $\mathbb{R}^{mn}$.

Consider a set $X\subseteq \mathbb{R}^n$ with $|X|=m.$ 
Condition $(i')$ fails precisely when there exists a subset of $k+1$ distinct points $\mathbf{u}_0, \mathbf{u}_1, \ldots, \mathbf{u}_k\in X$ with $k\leq n$ where the vectors $\mathbf{u}_1-\mathbf{u}_0, \ldots, \mathbf{u}_k-\mathbf{u}_0$ are linearly dependent. 
By Lemma~\ref{lem:lin_dep}, the vectors $\mathbf{u}_1-\mathbf{u}_0, \ldots, \mathbf{u}_k-\mathbf{u}_0$ are linearly dependent when $\det(A^TA)=0$, where the columns of $A$ are given by $\mathbf{u}_1-\mathbf{u}_0, \ldots, \mathbf{u}_k-\mathbf{u}_0$.
Since the determinant is polynomial in its entries, the set of points where condition $(i')$ fails in this way form an algebraic variety. 
And since finite unions of varieties are varieties, taking the union over the varieties corresponding to each choice of $k+1$ distinct points from $X$ yields the variety that contains all points for which condition $(i')$ fails. 
Therefore, we conclude that $(i')$ holds on the complement of a Zariski-closed subset of $\mathbb{R}^{mn}$, that is, a Zariski-open subset of $\mathbb{R}^{mn}$. 

The argument that $(ii')$ is satisfied on a Zariski-open subset of $\mathbb{R}^{mn}$ is similar.

Next, we restrict to the Zariski-open set where $(i')$ and $(ii')$ hold, and we show that condition $(iii)$ also holds true on a non-empty Zariski-open subset.
Let $\mathbf{u}_0,\ldots,\mathbf{u}_k\subseteq X$ generate a $k$-flat $U$, and let $\mathbf{v}_0,\ldots,\mathbf{v}_{\ell}\subseteq X$ generate a disjoint $\ell$-flat $V$.
We can find a shortest vector $\mathbf{w}$ from $V$ to $U$ by applying Lemma~\ref{lem:min_Euclidean_norm} to $U-V$,
whose elements can be written as:
\begin{align}
\label{eqn:w_vec}
\left[\mathbf{u}_0+\sum\limits_{i=1}^{k}s_i(\mathbf{u}_i-\mathbf{u}_0)\right]&-\left[\mathbf{v}_0+\sum\limits_{j=1}^\ell t_j(\mathbf{v}_j-\mathbf{v}_0)\right]\\
&=(\mathbf{u}_0-\mathbf{v}_0)+\sum\limits_{i=1}^k s_i(\mathbf{u}_i-\mathbf{u}_0)-\sum\limits_{j=1}^\ell t_j(\mathbf{v}_j-\mathbf{v}_0). \nonumber
\end{align}
Recall that $(iii)$ holds when the hyperplane normal to $\mathbf{w}$ that contains $U$ contains no points in $X$ other than $\mathbf{u}_0,\ldots,\mathbf{u}_k$.
Equation~\eqref{eqn:w_vec} implies that the entries of $\mathbf{w}$ are rational functions of the coordinates of the vectors $\mathbf{u}_0-\mathbf{v}_0$, $\mathbf{u}_i-\mathbf{u}_0$, and $\mathbf{v}_j-\mathbf{v}_0$.
Further, the denominator of the rational functions may be taken to be $\det(A^T A)$, where $A=[\mathbf{u}_1-\mathbf{u}_0,\ldots,\mathbf{u}_k-\mathbf{u}_0,\mathbf{v}_1-\mathbf{v}_0,\ldots,\mathbf{v}_\ell-\mathbf{v}_0]$.
This determinant is non-zero by Lemma~\ref{lem:lin_dep}, since the columns of $A$ are linearly independent by condition $(ii')$.
Let $\mathbf{w}'$ denote the vector obtained by multiplying $\mathbf{w}$ by the determinant $\det(A^T A)$.
Since we are restricting to the Zariski-open set on which $(i')$ and $(ii')$ hold, the equation
\[\mathbf{w}'\cdot(\mathbf{x}-\mathbf{u}_0)=0\]
for $\mathbf{x}\in X-\{\mathbf{u}_0,\ldots, \mathbf{u}_k,\mathbf{v}_0,\ldots,\mathbf{v}_\ell\}$ gives a system of polynomial equations that vanish exactly at the points where condition $(iii)$ fails.
Therefore, the set of points that satisfies conditions $(i')$, $(ii')$, and $(iii)$ is open in the Zariski topology.

To see that the set satisfying $(i')$, $(ii')$, and $(iii)$ is non-empty, note that the overarching Zariski-open set is the finite intersection of Zariski-open sets that satisfy the three conditions. We will show that each set is open, dense, and non-empty, and thus their intersection is as well.

As noted in the first half of the proof, condition $(i')$ fails when $X$ admits a subset of $k+1$ distinct points $\mathbf{u}_0, \mathbf{u}_1, \ldots, \mathbf{u}_k$ with $k\leq n$ where the vectors $\mathbf{u}_1-\mathbf{u}_0, \ldots, \mathbf{u}_k-\mathbf{u}_0$ are linearly dependent. And this happens when $\det(A^TA)=0$, where the columns of $A$ are given by $\mathbf{u}_1-\mathbf{u}_0, \ldots, \mathbf{u}_k-\mathbf{u}_0$. 
Fix a subset of $k+1$ points in $X$ and note that the complement of the variety defined by $\det(A^TA)=0$ is non-empty since it has codimension one. Thus the complement of this variety is open and dense and non-empty in $\mathbb{R}^n.$ This is true for each subset of $k+1$ points in $X.$ Thus the Zariski open set on which $(i')$ is satisfied (the finite intersection of these open and dense and non-empty sets) is also non-empty.

The argument that the Zariski-open subsets on which $(ii')$ and $(iii)$ are satisfied are non-empty is similar; both follow from the polynomial equations that define the associated varieties as in the first half of the proof.

Consequently, the intersection of these three open, dense, non-empty Zariski open sets is non-empty.

\end{proof}

As outlined above, if a subset of $\mathbb{R}^n$ is Zariski-open, that implies that the subset is open and dense in the metric topology on $\mathbb{R}^n$.
Thus, this shows that for a set $X\subseteq\mathbb{R}^n$ with $|X|=m<\infty$, being in strong general position is a generic property: the collection of configurations in strong general position forms an open and dense subset of all possible configurations, using the standard topology of $\mathbb{R}^{mn}$.

%%% subsection %%%

\subsection{Strong general position and SVMs}
\label{ssec:SGP-SVM}

Now that we have a stronger notion of general position, we can say more about the possible configurations of support vectors.
We show that for linearly separable points $X \subseteq \mathbb{R}^n$ in strong general position, there is a unique Radon point precisely when the separating hyperplane is optimal.

\begin{lemma}\label{lem:svm-hulls-intersect-pt}
If $X\subset\mathbb{R}^n$ is a set of linearly separable labeled points in strong general position, then the projections of the convex hulls of the positive and negative support vectors onto a separating hyperplane intersect at a single Radon point if and only if the separating hyperplane is optimal.
\end{lemma}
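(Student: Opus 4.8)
The plan is to build on Lemma~\ref{lem:svm-hulls-intersect}, which already shows the two projected convex hulls meet, and to upgrade ``nonempty'' to ``a single point'' using condition~(i) of strong general position (Definition~\ref{def:sgp}). As in the proof of Lemma~\ref{lem:svm-hulls-intersect}, I would assume without loss of generality that the separating hyperplane passes through the origin, so that it coincides with $\w^{\perp}$, the orthogonal complement of $\w$, and that $\rho$ is orthogonal projection onto $\w^{\perp}$. In the resulting canonical representation every support vector satisfies $y_i\w^T\x_i=1$, so if $S_+$ and $S_-$ denote the positive and negative support vectors, then $S_+$ lies on the hyperplane $\{\x:\w^T\x=1\}$ and $S_-$ on $\{\x:\w^T\x=-1\}$; these are parallel, disjoint, and each has direction space $\w^{\perp}$.

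Let $G_+$ and $G_-$ be the affine spans of $S_+$ and $S_-$; these are disjoint flats sitting inside the two parallel margin hyperplanes. The heart of the argument is the linear-algebraic claim that their direction spaces meet trivially, $\vec{G}_+\cap\vec{G}_-=\{\mathbf 0\}$. I would prove this by contradiction: a nonzero common direction $\u$ would let me take the line through some point of $G_+$ in direction $\u$ and the line through some point of $G_-$ in direction $\u$, producing two disjoint $1$-flats that contain the parallel vectors $\u$ and $\u$ --- precisely the degeneracy forbidden by condition~(i) with $k=l=1$ (note $k+l=2\le n$ when $n\ge2$; the case $n=1$ is immediate, as then the separating hyperplane is a single point). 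This is exactly the obstruction exhibited by the left-hand picture of Figure~\ref{fig:degenerate_cases}.

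Since $\rho$ restricts to the identity on $\w^{\perp}$ and both $\vec{G}_+$ and $\vec{G}_-$ lie in $\w^{\perp}$, the projected affine spans $F_+=\rho(G_+)$ and $F_-=\rho(G_-)$ have direction spaces $\vec{F}_+=\vec{G}_+$ and $\vec{F}_-=\vec{G}_-$, so $\vec{F}_+\cap\vec{F}_-=\{\mathbf 0\}$. Consequently $F_+\cap F_-$ contains at most one point: two distinct common points would differ by a nonzero vector lying in both $\vec{F}_+$ and $\vec{F}_-$. Because $\rho(\conv(S_+))\subseteq F_+$ and $\rho(\conv(S_-))\subseteq F_-$, the intersection of the projected convex hulls is contained in $F_+\cap F_-$; Lemma~\ref{lem:svm-hulls-intersect} guarantees it is nonempty, so $F_+\cap F_-$ consists of exactly one point, which is the unique Radon point.

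The one genuinely new step beyond Lemma~\ref{lem:svm-hulls-intersect} is the direction-space claim, and that is where I expect the only real difficulty to lie: the task is to recognize that ``the projected hulls overlap in more than a point'' is the same thing as ``the positive and negative support-vector spans share a parallel direction,'' which strong general position rules out. Once this translation is made, the remaining steps are routine, the key structural observation being that $\rho$ is the identity on $\w^{\perp}$ and therefore preserves the direction spaces that govern whether the projected flats cross transversally.
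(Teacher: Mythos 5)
Your proof is correct and follows essentially the same route as the paper's: existence of an intersection point comes from Lemma~\ref{lem:svm-hulls-intersect}, and uniqueness follows because two distinct intersection points would produce parallel $1$-flats inside the disjoint affine spans of the positive and negative support vectors, contradicting condition~(i) of Definition~\ref{def:sgp}. Your version is somewhat more explicit than the paper's (spelling out that the support vectors lie on the two parallel margin hyperplanes and that $\rho$ preserves direction spaces in $\w^{\perp}$), but the key idea and the use of strong general position are identical.
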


\begin{proof}
Suppose $X$ is a set of linearly separable labeled points in strong general position.
For the forward direction, note that by Theorem~\ref{thm:svm-hulls-intersect}, if the intersection of the projections of the convex hulls of the positive and negative support vectors contains at least one Radon point, then the separating hyperplane $H$ is optimal.
Thus, if the intersection is a single Radon point, it follows that $H$ is optimal. 

Conversely, let $H$ be the optimal separating hyperplane.
By Theorem~\ref{thm:svm-hulls-intersect}, the projections of the convex hulls of the positive and negative support vectors onto $H$ intersect in at least one point.
We will show that the intersection contains only one point when $X$ is in strong general position.
For $\mathbb{R}^1$, the intersection is a single point since the projections of two support vectors intersect in a point.
Now for $\mathbb{R}^n$ with $n>1$, in search of contradiction, suppose the intersection of the projections of the convex hulls of the positive and negative support vectors contains at least two distinct points, $\mathbf{x}\neq \mathbf{x}'$.
Note that the intersection of the two projections of convex hulls is convex, and hence the intersection contains the entire line segment between $\mathbf{x}$ and $\mathbf{x}'$.
It follows that the affine span of the positive support vectors contains a 1-flat parallel to a 1-flat in the affine span of the negative support vectors.
This is a contradiction since $X$ is in strong general position; see Definition~\ref{def:sgp2}$(ii)$.
Hence there is a unique Radon point.
\end{proof}

In Theorem~\ref{thm:perturbation} we will prove that for linearly separable points in strong general position, the set of support vectors is robust to perturbation.
The following result, Theorem~\ref{thm:small_perturb}, has only slightly milder hypotheses, and will imply Theorem~\ref{thm:perturbation} as an immediate consequence.
We give Theorem~\ref{thm:small_perturb} as a separate result so that we can also use it in the proof of Theorem~\ref{thm:prescribed-number-support-vectors}.

\begin{theorem}
\label{thm:small_perturb}
Let $X=X_+\cup X_-$ be a set of linearly separable labeled points in $\mathbb{R}^n$, with optimal parallel separating hyperplanes $H_+$ and $H_-$.
Let $C_+:=H_+\cap X_+$ and let $C_-:=H_-\cap X_-$.
Suppose the following two conditions hold:
\begin{itemize}
    \item[(a)] If $C_+=\{\mathbf{u}_0, \ldots, \mathbf{u}_k\}$ and $C_-=\{\mathbf{v}_0, \ldots, \mathbf{v}_\ell\}$, then the vectors $\mathbf{u}_1-\mathbf{u}_0, \ldots, \mathbf{u}_k-\mathbf{u}_0,\mathbf{v}_1-\mathbf{v}_0, \ldots, \mathbf{v}_\ell-\mathbf{v}_0$ are linearly independent.
    \item[(b)] There is a unique Radon point $\mathbf{r}$, and the points $\mathbf{r}_+\in H_+$ and $\mathbf{r}_-\in H_-$ (with $\rho(\mathbf{r}_+)=\mathbf{r}=\rho(\mathbf{r}_-)$) live in the interiors of $\mathrm{conv}(C_+)$ and $\mathrm{conv}(C_-)$, respectively.
\end{itemize}
Then, any sufficiently small perturbation of the points will preserve the identities of the support vectors.
\end{theorem}

\begin{proof}

We break the argument that follows into two steps.
First, we consider the easier case in which prior to perturbations, all of the data points are support vectors.
In this case, we show that any sufficiently small perturbation preserves the identities of the support vectors and moves the normal vector to the optimal separating hyperplane by an arbitrarily small amount.
Afterward, we handle the general case in which not all of the data points are support vectors (which is the typical setting).

We first consider the case in which prior to perturbations, all of the data points are support vectors, i.e.\ we assume $X=C_+\cup C_-$.
Since condition $(a)$ holds, we can apply Lemma~\ref{lem:min_Euclidean_norm} to the linearly independent vectors $\mathbf{u}_1-\mathbf{u}_0, \ldots, \mathbf{u}_k-\mathbf{u}_0,\mathbf{v}_1-\mathbf{v}_0, \ldots, \mathbf{v}_\ell-\mathbf{v}_0$ to see that there is a unique choice of coefficients $s_i$ and $t_j$ minimizing the Euclidean norm of the vector
\begin{equation}
\label{eqn:w_vec_2}
\mathbf{w}=(\mathbf{u}_0 -\mathbf{v}_0)+\sum\limits_{i=1}^k s_i(\mathbf{u}_i -\mathbf{u}_0) -\sum\limits^\ell_{j=1}t_j(\mathbf{v}_j -\mathbf{v}_0).
\end{equation}
Furthermore, Lemma~\ref{lem:min_Euclidean_norm} implies that the values $s_i$, $t_j$, and the entries of $\mathbf{w}$ are rational functions of the coordinates of the vectors $\mathbf{u}_0-\mathbf{v}_0$, $\left\{\mathbf{u}_i-\mathbf{u}_0\right\}_{i=1}^k$, and $\left\{\mathbf{v}_j-\mathbf{v}_0\right\}_{j=1}^{\ell}$, whose denominators may be taken to be $\det(A^T A)$, where $A=[\mathbf{u}_1-\mathbf{u}_0,\ldots,\mathbf{u}_k-\mathbf{u}_0,\mathbf{v}_1-\mathbf{v}_0,\ldots,\mathbf{v}_\ell-\mathbf{v}_0]$.
This determinant is non-zero by Lemma~\ref{lem:lin_dep}.
Define $s_0=1-\sum_{i=1}^k s_i$ and $t_0=1-\sum_{j=1}^{\ell} t_j$, and observe that $s_0$ and $t_0$ are also rational functions of the coordinates of the vectors $\mathbf{u}_0-\mathbf{v}_0$, $\left\{\mathbf{u}_i-\mathbf{u}_0\right\}_{i=1}^k$, and $\left\{\mathbf{v}_j-\mathbf{v}_0\right\}_{j=1}^{\ell}$, with the same expression in the denominator.

Since $\mathbf{r}_+$ is in the interior of $\mathrm{conv}(C_+)$ by (b), we can write $\mathbf{r}_+=\sum_{i=0}^k s'_i\mathbf{u}_i$ with $\sum_{i=0}^k s'_i=1$ and with $s'_i$ positive for all $i$.
Similarly, we can write $\mathbf{r}_-=\sum_{j=0}^{\ell} t'_j \mathbf{v}_j$ with $\sum_{j=0}^\ell t'_j=1$ and with $t'_j$ positive for all $j$.
Since $\mathbf{w}$ is the minimum-length vector from $H_-$ to $H_+$, it is normal to the optimal separating hyperplane $H$.
Since $\rho$ is the orthogonal projection onto $H$, since $\rho(\mathbf{r}_+)=\rho(\mathbf{r}_-)$, and since $\mathbf{r}_+\in H_+$ and $\mathbf{r}_-\in H_-$, it follows that $\mathbf{w}=\mathbf{r}_+ - \mathbf{r}_-$.
We rearrange to obtain
\begin{align*}
\mathbf{w}
&=\mathbf{r}_+ - \mathbf{r}_- \\
&=\sum\limits_{i=0}^k s'_i\mathbf{u}_i-\sum\limits^\ell_{j=0}t'_j\mathbf{v}_j \\
&=s'_0\mathbf{u}_0+\sum\limits_{i=1}^k s'_i\mathbf{u}_i-t'_0\mathbf{v}_0 -\sum\limits^\ell_{j=1}t'_j\mathbf{v}_j \\
&=\left(1-\sum\limits_{i=1}^k s'_i\right)\mathbf{u}_0+\sum\limits_{i=1}^k s'_i\mathbf{u}_i-\left(1-\sum\limits^\ell_{j=1}t'_j\right)\mathbf{v}_0 -\sum\limits^\ell_{j=1}t'_j\mathbf{v}_j \\
&=(\mathbf{u}_0 -\mathbf{v}_0)+\sum\limits_{i=1}^k s'_i(\mathbf{u}_i -\mathbf{u}_0) -\sum\limits^\ell_{j=1}t'_j(\mathbf{v}_j -\mathbf{v}_0).
\end{align*}
Since the coefficients in Equation~\eqref{eqn:w_vec_2} are unique, it follows that $s_i=s'_i$ and $t_j=t'_j$ for all $1\le i\le k$ and $1\le j\le \ell$.
Furthermore, $s_0=1-\sum_{i=1}^k s_i=1-\sum_{i=1}^k s'_i=s'_0$, and similarly $t_0=t'_0$.
It follows that $s_i=s'_i$ and $t_j=t'_j$ for all $0\le i\le k$ and $0\le j\le \ell$.
Therefore, $s_i$ and $t_j$ are positive for all $0\le i\le k$ and $0\le j\le \ell$.

Since nonzero determinants remain nonzero upon sufficiently small perturbation, a sufficiently small perturbation of the points in $C_+\cup C_-$ preserves that property $(a)$ still holds for the perturbed points (by Lemma~\ref{lem:lin_dep}).
So, the determinants that appear in the rational expressions (from Lemma~\ref{lem:min_Euclidean_norm}) for $\mathbf{w}$, $s_i$, and $t_j$ remain nonzero under sufficiently small perturbations of the data.
Therefore, we have well-defined
rational equations for $\mathbf{w}$, $s_i$, $t_j$, in this range of perturbations, in terms of the variables which are the entries of the vectors $\mathbf{u}_i$ and $\mathbf{v}_j$.
Since rational expressions are continuous in their domains of definition, it follows that in a sufficiently small range of perturbations, the numbers $s_i$ and $t_j$ for $0\le i\le k$ and $0\le j\le \ell$ change by a bounded amount, i.e.\ we can assume they remain positive.
So the perturbed versions of the points $\mathbf{r}_+=\sum_{i=0}^k s_i \mathbf{u}_i$ and $\mathbf{r}_-=\sum_{j=0}^{\ell} t_j\mathbf{v}_j$ remain in the interiors of the perturbed versions of the convex hulls $\mathrm{conv}(C_+)$ and $\mathrm{conv}(C_-)$, and so in this range of perturbations there still exists a Radon point.
It thus follows from Theorem~\ref{thm:svm-hulls-intersect} that the perturbed hyperplane $H$ normal to the perturbed vector $\mathbf{w}$ remains optimal under sufficiently small perturbations.
Lastly, by the continuity of the rational equations in this range, the endpoints of the vector $\mathbf{w}$ and hence also the direction of $\mathbf{w}$ move by a bounded amount, under sufficiently small perturbations.

Next, we consider the general case in which not all of the data points need to be support vectors --- i.e., in which we have no assumptions beyond those in the statement of the theorem.
First, we note that upon adding back in the remaining non-support vector data points, the margin of the optimal separating hyperplane cannot get any wider (since each extra data point is an extra constraint).
By the paragraph above, a sufficiently small perturbation of the support vectors produces a new vector $\mathbf{w}$ that is currently known to be optimal \emph{from the perspective of the support vectors alone}, maintaining the property that the perturbations of the support vectors remain on the margins of this hyperplane.
So, it suffices to show that when \emph{all} of the data points are considered, each perturbed by a sufficiently small amount, no data point that is not originally a support vector either comes into contact with or crosses to the other side of either margin.
To see this, note that by the paragraph above, the normal vector $\mathbf{w}$ is given by a rational equation in terms of the variables which are the entries of the vectors $\mathbf{u}_i$ and $\mathbf{v}_j$.
By the continuity of this rational expression, sufficiently small perturbations of the support vectors $\mathbf{u}_i$ and $\mathbf{v}_j$ will move the endpoints and the direction of the vector $\mathbf{w}$ by an arbitrarily small amount.
Therefore, a sufficiently small perturbation of \emph{all} of the data points will not move a data point that was not originally a support vector to either intersect or cross one of the margins $H_+$ or $H_-$.
It follows that the identities of the support vectors are maintained under sufficiently small perturbations.
\end{proof}

Now we can show that points in strong general position have sufficient restrictions such that perturbing all points by a small amount does not change the points identified as support vectors.

\begin{theorem}
\label{thm:perturbation}
Given a set $X$ of linearly separable points in strong general position, there exists a $\delta>0$ such that simultaneously perturbing every point in $X$ by at most $\delta$ does not change which points are identified as support vectors.
\end{theorem}

\begin{proof}
It suffices to show that the hypotheses $(a)$--$(b)$ of Theorem~\ref{thm:small_perturb} are satisfied, when $X$ is a set of linearly separable points in strong general position.
Note that $(a)$ is satisfied by the definition of strong general position.
The fact that there is a unique Radon point follows from Lemma~\ref{lem:svm-hulls-intersect-pt}.
Furthermore, in the formulas,
\[\mathbf{r}_+=\sum_{i=0}^k s_i\mathbf{u}_i
 \quad\text{and}\quad \mathbf{r}_-=\sum_{j=0}^{\ell} t_j\mathbf{v}_j,
\]
expressing the projected Radon points $\mathbf{r}_+$ and $\mathbf{r}_-$ in terms of the support vectors $\mathbf{u}_i$ and $\mathbf{v}_j$,
each $s_i$ and $t_j$ must be positive, else we could omit a support vector and get the same separating hyperplane, contradicting condition $(iii)$ in the definition of strong general position.
This gives $(b)$.
Thus, strong general position satisfies the hypotheses for Theorem~\ref{thm:small_perturb}, and hence, a sufficiently small perturbation does not change which points are identified as support vectors.
\end{proof}

We furthermore show that for linearly separable points $X \subseteq \mathbb{R}^n$ in strong general position, there are at most $n+1$ support vectors.

\begin{theorem}\label{thm:at-most-n+1}
Suppose $X\subseteq \mathbb{R}^n$ is in strong general position, and that $X$ is equipped with linearly separable labels.
Then there are at most $n+1$ support vectors.
\end{theorem}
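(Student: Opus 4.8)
The plan is to show that strong general position forces \emph{all} the support vectors, taken together, to be affinely independent as points of $\R^n$; since at most $n+1$ points of $\R^n$ can be affinely independent, this bounds their number by $n+1$. Write $S$ for the set of indices of the support vectors, so that by the discussion following Definition~\ref{def:sgp} we have $\alpha_j>0$ for every $j\in S$. I will use the two facts recorded in the proof of Lemma~\ref{lem:svm-hulls-intersect}, namely the KKT identities $\w=\sum_{j\in S}\alpha_j y_j\x_j$ and $\sum_{j\in S}\alpha_j y_j=0$, together with condition~$(ii)$ of strong general position, which asserts that this nonnegative representation of $\w$ is the unique one.

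The key step is to convert this uniqueness into affine independence, after which the bound is immediate. Suppose, for contradiction, that the points $\{\x_j\}_{j\in S}$ are affinely dependent; then there exist scalars $\gamma_j$, not all zero, with $\sum_{j\in S}\gamma_j\x_j=\mathbf{0}$ and $\sum_{j\in S}\gamma_j=0$. For a real parameter $t$ set $\alpha_j'=\alpha_j+t\gamma_j y_j$. Using $y_j^2=1$ one checks directly that $\sum_{j\in S}\alpha_j' y_j\x_j=\w+t\sum_{j\in S}\gamma_j\x_j=\w$ and that $\sum_{j\in S}\alpha_j' y_j=t\sum_{j\in S}\gamma_j=0$, so the perturbed coefficients give another representation of $\w$ obeying the same affine constraint. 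Since each $\alpha_j$ is strictly positive, for all sufficiently small $|t|$ we still have $\alpha_j'>0$, and since some $\gamma_j\ne 0$ a choice of $t\ne 0$ makes this representation distinct from the original, contradicting condition~$(ii)$. Hence $\{\x_j\}_{j\in S}$ is affinely independent, and so $|S|\le n+1$.

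I expect the main obstacle to be pinning down precisely what the uniqueness in condition~$(ii)$ quantifies over; in particular the argument needs every competing dual solution to also satisfy $\sum_{j}\alpha_j y_j=0$, since this is exactly what upgrades a linear dependence among the $\x_j$ to an affine one and yields the sharp bound $n+1$ rather than $n$, while the strict positivity $\alpha_j>0$ of genuine support vectors is what keeps the perturbed solution feasible. As an alternative I could argue geometrically from Lemma~\ref{lem:svm-hulls-intersect-pt}: the single Radon point is a strictly positive convex combination, hence lies in the relative interior of each projected hull, which forces the affine spans of the two projected classes to meet only at that point and thus to be transverse inside the $(n-1)$-dimensional separating hyperplane, giving $d_P+d_N\le n-1$ for their dimensions; combined with within-class affine independence this again caps the count at $n+1$.
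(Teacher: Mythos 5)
Your proof is correct, and it takes a genuinely different route from the paper's. The paper argues by contradiction from $n+2$ support vectors and splits into two cases: when one class has a single support vector, it applies Carath\'eodory's theorem inside the $(n-1)$-flat containing the other class's margin to produce a second nonnegative representation of $\w$, violating condition (ii) of Definition~\ref{def:sgp}; when both classes have at least two support vectors, it projects to the separating hyperplane and counts dimensions to find parallel vectors in the two classes' flats, violating condition (i). Your argument replaces all of this with a single dual-perturbation step: an affine dependence $\sum_{j}\gamma_j\x_j=\mathbf{0}$, $\sum_j\gamma_j=0$ yields the one-parameter family $\alpha_j+t\gamma_j y_j$ of nonnegative representations of $\w$ that still satisfy the balance constraint $\sum_j \alpha_j y_j=0$, violating condition (ii); condition (i) is never used, and there is no case analysis. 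This buys two things: a stronger conclusion --- the support vectors are affinely \emph{independent}, not merely at most $n+1$ in number --- which explains why the extremal configurations in the paper's subsequent lemma are exactly simplices; and a precise localization of where the bound comes from (dual uniqueness alone). The subtlety you flag is real and you resolve it correctly: condition (ii) must be read as uniqueness among representations obeying the balance constraint, for otherwise your same perturbation applied to a mere linear dependence (with $\sum_j\gamma_j\neq 0$ allowed) would force at most $n$ support vectors, contradicting the paper's lemma that $n+1$ support vectors occur in strong general position; since your perturbed coefficients do satisfy the balance constraint, they are a legitimate competitor under either reading, so your proof of the stated bound goes through regardless. Like the paper's own proof, you rely on the discussion following Definition~\ref{def:sgp} that in strong general position every support vector has $\alpha_j>0$; that is fair game. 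The alternative geometric sketch in your final paragraph is looser (the ``within-class affine independence'' it invokes would need its own argument), but your main proof does not depend on it.
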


\begin{proof}
Suppose for a contradiction that we have at least $n+2$ support vectors.
We proceed with two cases.

First, suppose we have only one positive support vector, and therefore at least $n+1$ negative support vectors. 
Then, we have at least $n+1$ points in an $(n-1)$-flat, which is more than required to determine an $(n-1)$-flat.
This violates $(i)$ in Definition~\ref{def:sgp2}.

Now suppose we have $k+1\ge 2$ positive support vectors and $\ell+1\ge 2$ negative support vectors with $(k+1)+(\ell+1)\ge n+2$, i.e.\ with $k+\ell\ge n$.
By Definition~\ref{def:sgp2}$(i)$, the affine span of the positive support vectors is a $k$-flat, and the affine span of the negative support vectors is an $\ell$-flat.
Projecting the positive and negative support vectors onto the separating $(n-1)$-dimensional hyperplane produces a $k$-flat and an $\ell$-flat, respectively, living in an $(n-1)$-flat.
Indeed, to see (for example) that the dimension $k$ does not decrease upon projecting, note that the positive support vectors live in a hyperplane parallel to the separating hyperplane.
Since $k+\ell>n-1$, the projected $k$- and $\ell$-flats intersect in at least a line inside the separating hyperplane.
This means that prior to projecting, the $k$- and $\ell$-flats of the positive and negative support vectors contain parallel vectors, contradicting Definition~\ref{def:sgp2}$(ii)$.

Therefore we can have at most $n+1$ support vectors.
\end{proof}

Additionally, we show that we can have anywhere from 2 to $n+1$ support vectors for points in strong general position.

\begin{theorem}
\label{thm:prescribed-number-support-vectors}
Let $i,j \geq 1$ with $i+j\leq n+1$. 
Then there is a labeled subset $X\subseteq\mathbb{R}^n$ in strong general position with $i$ support vectors from the positive class and with $j$ support vectors from the negative class.
\end{theorem}

\begin{proof} 
Suppose $i$, $j$, and $n$ are as above.
Let $\{\mathbf{v}_m\}_{m=1}^i$ be the vertices of a regular $(i-1)$-simplex $I$ in $\mathbb{R}^{i-1}$, and let $\{\mathbf{w}_n\}_{n=1}^j$ be vertices of a regular $(j-1)$-simplex $J$ in $\mathbb{R}^{j-1}$.
If necessary, translate $I$ and $J$ so that the interiors of both convex hulls contain the origin.
Define the sets
\begin{align*}
X_+ &=\{(\mathbf{v}_1,\mathbf{0}_{j-1},1), (\mathbf{v}_2,\mathbf{0}_{j-1},1), \ldots, (\mathbf{v}_i,\mathbf{0}_{j-1},1)\} \subseteq \mathbb{R}^{i+j-1} \\
X_-&=\{(\mathbf{0}_{i-1},\mathbf{w}_1,-1), (\mathbf{0}_{i-1},\mathbf{w}_2,-1), \ldots, (\mathbf{0}_{i-1},\mathbf{w}_j,-1)\} \subseteq \mathbb{R}^{i+j-1},
\end{align*}
where the length of each zero vector is denoted by the associated subscript, so that $X_+$ and $X_-$ are subsets of $\mathbb{R}^{i+j-1}$.
Note that since $i+j-1\leq n,$ these points can naturally be embedded in $\mathbb{R}^n$.

We claim that the sets $X_+$ and $X_-$ are linearly separable, with the optimal separating hyperplane $H$ defined by the equation $x_{i+j-1}=0$.
First, note that the parallel hyperplane margins $H_+$ and $H_-$ are defined by the equations $x_{i+j-1}=1$ and $x_{i+j-1}=-1$, respectively.
Now consider the intersection $\rho(\text{conv}(X_+))\cap\rho(\text{conv}(X_-))$ of the orthogonal projections of the respective convex hulls onto the hyperplane $H$.
Certainly, $\mathbf{0}\in\mathbb{R}^n$ is in this intersection.
Furthermore, suppose $\mathbf{p}\in\mathbb{R}^n$ is in this intersection.
Since $\mathbf{p}\in \rho(\text{conv}(X_+)),$ there exist coefficients $\alpha_k$ such that $\mathbf{p}=\left(\sum_k \alpha_k\mathbf{v}_k,\mathbf{0}_{j-1},0\right),$ with $\alpha_k\geq 0$ and $\sum_k\alpha_k=1$.
Similarly, since $\mathbf{p}\in \rho(\text{conv}(X_-)),$ $\mathbf{p}$ is of the form $\mathbf{p}=\left(\mathbf{0}_{i-1},\sum_\ell \beta_\ell\mathbf{w}_\ell,0\right),$ where $\beta_\ell\geq 0$ and $\sum_\ell\beta_\ell=1$.
It follows that $\mathbf{p}=\mathbf{0},$ and thus  $\rho(\text{conv}(X_+))\cap\rho(\text{conv}(X_-))=\left\{\mathbf{0}\right\}$.
By Theorem \ref{thm:svm-hulls-intersect}, $H$ is the optimal separating hyperplane.

Furthermore, each point in $X_+$ and $X_-$ is a support vector: the hyperplanes $H_+$ and $H_-$ are the hyperplanes parallel to the optimal separating hyperplane $H$ that achieve the greatest margin of separation, and these planes fully contain $X_+$ and $X_-.$

It remains to handle the strong general position criterion.
The construction of $X=X_+\cup X_-$ satisfies the hypothesis of Theorem~\ref{thm:small_perturb}, and hence the identities and the number of support vectors are stable under small perturbations. 
Since strong general position is satisfied on a Zariski open set (Theorem~\ref{thm:zariski}), if $X$ is not already in strong general position, then there exists a small perturbation such that the perturbed dataset is in strong general position and has the prescribed number of support vectors in each class. 
Therefore, there is a labeled subset in $\mathbb{R}^n$ in strong general position with $i$ support vectors from the positive class and with $j$ from the negative class.
\end{proof}

We remark that one could alternatively prove Theorem~\ref{thm:prescribed-number-support-vectors} by embedding a regular $(i+j-1)$-dimensional simplex in $\mathbb{R}^n$, and labelling $i$ vertices to be in the positive class and the remaining $j$ to be in the negative class.
We show these points are in strong general position in Appendix~\ref{app:constructive}.

%%%%%%%%%%%%% Section %%%%%%%%%%%%%%

\section{Computational experiments on the type of Radon configurations}
\label{sec:experiments}

We provide additional intuition behind the interaction between Radon's theorem and support vector machines via computational experiments.
We wrote a Python program to compute the distribution of support vector configurations for randomly sampled, linearly separable data.
For each trial, we sample 10 points from two normal distributions with standard deviations 1.
The centers of these two normal distributions are drawn uniformly at random from the square $[-a,a]\times[-a,a]$ in $\mathbb{R}^2$ or cube $[-a,a]\times[-a,a]\times[-a,a]$ in $\mathbb{R}^3$, where we vary $a$ over $a=5$, $a=10$, and $a=20$.
The 10 points from one normal distribution are in the positive class, and the 10 points from the second distribution are in the negative class.
If the two classes of points are not linearly separable, then we discard the trial.
(In other words, our randomly sampled points are conditioned to be linearly separable.)
We compute the separating hyperplane, and then classify the type of Radon configuration. 
The configurations are put into categories: two support vectors, three support vectors, or four support vectors (in $\mathbb{R}^3$).

\begin{figure}[htb]
\centering
\begin{subfigure}{.5\textwidth}
  \centering
  \includegraphics[width=.8\linewidth]{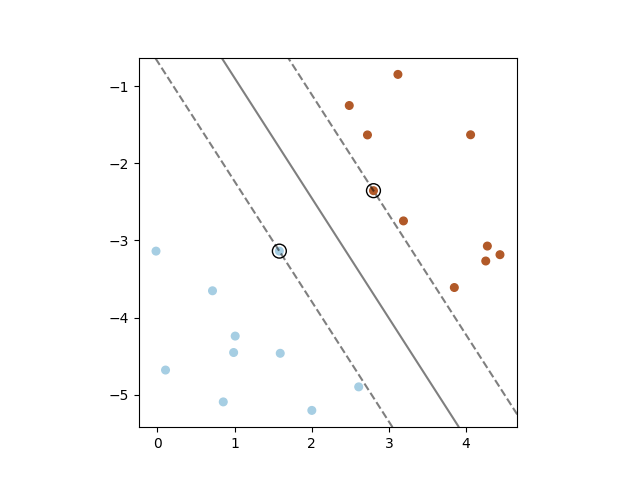}
  \caption{Two support vectors}
  \label{fig:2svs}
\end{subfigure}%
\begin{subfigure}{.5\textwidth}
  \centering
  \includegraphics[width=.8\linewidth]{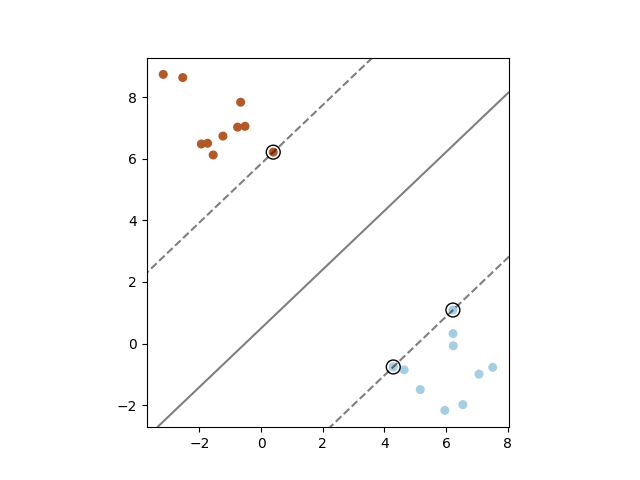}
  \caption{Three support vectors}
  \label{fig:3svs}
\end{subfigure}
\caption{Example images of randomly generated linearly seperable groups of points with (A) two and (B) three support vectors, both drawn from trials where $a=10$.}
\label{fig:svsfor2d}
\end{figure}

\begin{table}[htb]
\begin{center}
\begin{tabular}{|r||c|c|c|} \hline
$a = $ & 5 & 10 & 20 \\ \hline\hline
\# configurations with 2 support vectors & 417 & 632 & 809 \\ \hline
\# configurations with 3 support vectors & 583 & 368 & 191 \\ \hline
\end{tabular}
\caption{The number of support vectors in $\mathbb{R}^2$ from 1,000 trials, as a function of a parameter $a$ controlling the expected distance between the center of the distributions.}
\label{table:simulations2}
\end{center}
\end{table}

\begin{table}[htb]
\begin{center}
\begin{tabular}{|r||c|c|c|} \hline
$a = $ & 5 & 10 & 20 \\ \hline\hline
\# configurations with 2 support vectors & 279 & 554 & 758 \\ \hline
\# configurations with 3 support vectors & 458 & 367 & 221 \\ \hline
\# configurations with 4 support vectors (2--2 split) & 145 & 47 & 9 \\ \hline
\# configurations with 4 support vectors (3--1 split) & 118 & 32 & 12 \\ \hline
\end{tabular}
\caption{The number of support vectors in $\mathbb{R}^3$ from 1,000 trials, as a function of a parameter $a$ controlling the expected distance between the center of the distributions.}
\label{table:simulations3}
\end{center}
\end{table}

For the case of $\mathbb{R}^2$, see Table~\ref{table:simulations2}.
Out of 1,000 trials, for $a=5$, we had 417 cases with two support vectors and 583 cases with three support vectors.
As we increase the size of the box (and hence also the expected distance between centers) to $a=10$, we get 632 cases of two support vectors and 368 cases of three support vectors. 
For the largest $a$ value, $a=20$, we had 809 cases of two support vectors and 191 cases of three support vectors.
As we increase the expected distance between centers, the likelihood of having two support vectors increases, while the complimentary likelihood of having three support vectors decreases. 

In $\mathbb{R}^3$, we have similar results (see Table~\ref{table:simulations3}).
An interesting difference is that there are now two different types of configurations with 4 support vectors: a 2--2 split with 2 support vectors in both the positive and negative classes, versus a 3--1 split with 3 support vectors in one class and 1 in the other.
Out of 1,000 trials, for $a=5$, we had 279 cases of two support vectors, 458 cases of three support vectors, and 263 cases of four support vectors. 
As we increased the size of the bounding box to $a=10$, we have 554 cases with two support vectors, 367 cases of three support vectors, and 79 cases of four support vectors. 
Finally, with $a=20$, we have 758 cases with two support vectors, 221 cases of three support vectors, and 21 cases of four support vectors. 
Just as we observed in $\mathbb{R}^2$, it is also the case in $\mathbb{R}^3$ that as we increase the expected distance between centers, the likelihood of having two support vectors increases, while the likelihood of having three or four support vectors decreases.

Our Python code is available at \url{https://github.com/brimcarr/svm_radon}.

%%%%%%%%%%%%%%%%%%%%%%%%%%%%%%%%%%%%%%%%%%%%%%%%%%%

\section{Conclusion}
In this paper, we examined the interaction between support vector machines and Radon's theorem. 
After establishing background on SVM, Radon's theorem, and some additional algebraic tools in Sections~\ref{sec:svmbackground} and~\ref{sec:radon}, we report on support vectors and their possible Radon configurations in Sections~\ref{sec:RadonsThmandSVM} and~\ref{sec:svm-general-selected}. 
Given a set of linearly separable data, the projection of the convex hulls of the positive and negative support vectors to the separating hyperplane intersect if and only if the separating hyperplane is optimal.
Furthermore, if the points are in strong general position, then there is a unique point of intersection, the Radon point, if and only if the separating hyperplane is optimal.
For points in strong general position, there can be at most $n+1$ support vectors in $\mathbb{R}^n$, and every number of support vectors from $2$ to $n+1$ can be attained.
Finally, we show that given linearly separable data in strong general position, there exists a $\delta>0$ such that perturbing the data points by at most $\delta$ preserves the data points labeled as support vectors.

Our research shows that for points in strong general position there is a maximum number of support vectors, and that we are guaranteed to get one of a certain number of Radon configurations. 
If a computation returns too many support vectors, this means that the data is not in strong general position, implying that small perturbations could change the set of support vectors.
Alternatively, the code computing the support vectors needs to be run at an increased level of precision.
We found this to be the case when running standard MATLAB and python algorithms for hard margin support vector machines: for large numbers of points, the code could return more than $n+1$ support vectors in $\mathbb{R}^n$, but those additional support vectors disappeared upon increasing the level of machine precision.

We share a list of questions that are natural to ask following our results.

\begin{question}
What can be said for spherical or ellipsoidal SVM? See for example~\cite{yao2008utilizing}, especially Figure~3 within.
In hard margin spherical SVM, we suppose that the two classes of data can be separated with one class inside a sphere, and with the second class outside.
Ellipsoidal SVM is analogous, except that the sphere is generalized to also allow for separating ellipsoids.
What is the maximal number of possible support vectors for separable points in general position for spherical and ellipsoidal SVM, where ``general position" will have to be re-defined for this new context?
Is there a version of Radon's theorem that relates to spherical or ellipsoidal SVM?
\end{question}

\begin{remark}
It is of course possible that two classes of data are not separable by a hyperplane.
Nevertheless, they may still be nicely separable by some nonlinear $(n-1)$-dimensional surface in $\mathbb{R}^n$.
When this is the case, one can use the \emph{kernel trick} to nonlinearly map the data from $\mathbb{R}^n$ into $\mathbb{R}^N$ with $N>n$, and then find a separating hyperplane in $\mathbb{R}^N$. 
One often searches over several such maps in order to find one enabling classification. 
The kernel trick allows one to compute in $\mathbb{R}^N$ nearly as efficiently as in $\mathbb{R}^n$.
For a more in-depth look at the kernel trick, please see~\cite{cristianini2000SVM}.
With kernel SVM, we can still say something about the support vectors: when the embedded data is linearly separable in and in strong general position in $\mathbb{R}^N$, then there can be at most $N+1$ support vectors.
\end{remark}

\begin{question}
Is there anything precise that can be said when the data is not linearly separable, and hence one uses soft-margin SVM (allowing errors) instead of hard margins?
There are several different ways one could define support vectors in this context.
\end{question}

\begin{question}
\label{ques:tverberg}
Tverberg's theorem~\cite{barany2018tverberg,tverberg1966generalization} states that given $(r-1)(n+1)+1$ points in $\mathbb{R}^n$, there is a partition into $r$ parts whose convex hulls intersect. 
Tverberg's theorem is a generalization of Radon's theorem, which is obtained by restricting to $r=2$.
Is Tverberg's theorem related to any of the several versions of multiclass SVMs~\cite{duan2005best,hsu2002comparison}?
\end{question}

\begin{question}
Which of the possible Radon configurations (say one support vector in each class, or one support vector in the positive class and two in the negative class, or two support vectors in each class, \textit{etc.}) is the most likely?
For this question to make sense, we need a random model of labeled data points that are linearly separable.
What are reasonable such models, beyond the simple model considered in Section~\ref{sec:experiments}?
There is almost certainly no canonical such model, but instead a zoo of random models of linearly separable data that one could consider.

One class of probability models could be as follows. Select two different probability distributions in $\mathbb{R}^n$ with linearly separable supports.
Sample positively labeled points from one distribution, and negatively labeled points from the second.
The resulting points will be linearly separable.

A second class of probability models could be to consider two arbitrary probability distributions in $\mathbb{R}^n$, one corresponding to each of the positive and negative classes.
Upon sampling a point from a distribution, if that new point makes it so that the data are no longer linearly separable, then reject that point and sample again at random.

Under any such random model for linearly separable labeled points, we are interested in the following question.
Which Radon configurations occur with positive probability, and what are those probabilities?
The answers will of course depend on the random model selected.

The paper~\cite{finkseparability} %(\footnote{\url{https://www.cs.ucsb.edu/~suri/psdir/euro.pdf}})
considers the problem of computing the probability that two probabilistic point sets are linearly separable.
\end{question}

\begin{question}
If you toss two 6-sided die into the air and they collide, then (with probability one) they either collide vertex-to-face or edge-to-edge.
The probability of a vertex-to-face collision is 46\%, and the probability of an edge-to-edge collision 54\%.
The answer is computed using integral geometry.
This is called Firey's dice problem; see~\cite{firey1972integral,firey1974kinematic,mcmullen1974dice} and~\cite[Pages 358--359]{schneider2008stochastic}. %(\footnote{\url{http://www.math.utah.edu/~treiberg/IntGeomSlides.pdf}})

When the two dice collide, it is analogous to a very special case of SVM where the support vectors from both classes all lie on the same hyperplane, i.e.\ the width of the margin is zero.
A question that is perhaps more closely related to SVM is: what is the probability of the different types of ``closest point configurations" when we specify that the dice are at exactly distance $d$ apart? 
Once $d>0$, the two closest points can both be vertices with positive probability, or the two closest points can be a vertex and a point on an edge. The vertex-and-face and edge-and-edge configurations still occur with positive probability.
The Firey dice problem can perhaps be considered as a limit as $d\to0$.
This problem is also interesting for dice of any convex shape, for example tetrahedral dice.
\end{question}

%There are many new directions to extend this research and we hope to pursue them at a later date.

%%%%%%%%%%%%%%%%%%%%%%%%%%%%%%%%%%%%%%%%%%%%%%%%%%%

\section*{Acknowledgements}

We would like to thank Michael Kirby, Chris Peterson, and Simon Rubinstein--Salzedo for helpful conversations.
We would also like thank the anonymous reviewers for their significant contributions.
In particular, we would like to thank an anonymous reviewer for suggesting and proving the converse result in Theorem~\ref{thm:svm-hulls-intersect} as well as providing us with the tools to help us concisely prove Theorems~\ref{thm:axiom_equiv}, \ref{thm:zariski}, \ref{thm:small_perturb}, and \ref{thm:perturbation}.

%MK - ellipsoidal SVM
%CP - Kolmogorov 0-1 law
%SRS - Firey dice problem

\bibliographystyle{plain}
\bibliography{svmRadon.bib}

\appendix

\section{An example of a direct proof of strong general position}
\label{app:constructive}

The purpose of this appendix is to provide a concrete example of a set of points that satisfy the conditions of strong general position, along with a direct proof that strong general position holds. 
We use simplicies as the basis for our example, as they are easy to visualize and show up in a wide range of applications.

Before we begin, recall that a $(k-1)$-simplex, $\Delta^{k-1}$, can be embedded in the $(k-1)$-dimensional linear subspace of $\mathbb{R}^{k}$ satisfying the constraint $x_1+\ldots+x_k=1$.
Indeed, note that $\Delta^{k-1}$ is the convex hull of the standard basis vectors $\mathbf{e}_1=(1,0,\ldots,0)$, \ldots, $\mathbf{e}_k=(0,\ldots,0,1)$ in a $(k-1)$-dimensional subspace of $\mathbb{R}^k$, which we will think of as a copy of $\mathbb{R}^{k-1}$ in $\mathbb{R}^k$.
For example, we can view the 2-simplex $\Delta^2$ as the convex hull of the standard basis vectors $\mathbf{e}_1$, $\mathbf{e}_2$, and $\mathbf{e}_3$ inside the 2-plane $x_1+x_2+x_3=1$ in $\mathbb{R}^3$ (see Figure~\ref{fig:2_simplex_in_R3}).
This will be a convenient approach in the proof below.

\begin{figure}[h]
    \centering
\begin{tikzpicture}[scale=0.6]
    \coordinate [] (A) at (0,3);
    \coordinate [] (B) at (3,0);
    \coordinate [] (C) at (-1.5,-1);

    \foreach \i in {A,B,C}
    \draw[-, fill=gray!30, opacity=.5] (A)--(B)--(C)--cycle;
    
    \draw[->,thick] (0,0)--(0,4);
    \draw[->,thick] (0,0)--(4,0);
    \draw[->,thick] (0,0)--(-2,-1.3333);
    
    \node [circle,fill=blue,inner sep=0pt,minimum size=4pt,label=right:{$(0,0,1)$}] (G) at (0,3) {};
    \node [rectangle,fill=red,inner sep=0pt,minimum size=4pt,label=above left:{$(1,0,0)$}] (G) at (-1.5,-1) {};
    \node [rectangle,fill=red,inner sep=0pt,minimum size=4pt,label=above right:{$(0,1,0)$}] (G) at (3,0) {};

    \end{tikzpicture}\\
    \caption{A 2-simplex embedded in the 2-dimensional subspace $x_1+x_2+x_3=1$ of $\mathbb{R}^3$ with support vectors assigned to the vertices.}
    \label{fig:2_simplex_in_R3}
\end{figure}
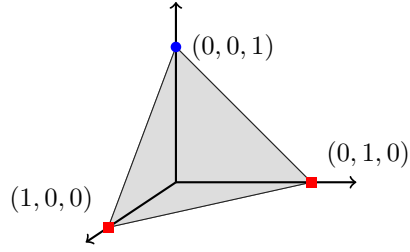

Define the set $X$ to be the vertices of the $(m-1)$-simplex that is defined as above.
That is, $X$ is contained in an $(m-1)$-dimensional subspace (i.e.\ a copy of $\mathbb{R}^{m-1}$) in $\mathbb{R}^{m}$, and the elements of $X$ are the standard basis vectors $\{\mathbf{e}_k\}_{k=1}^{m}$.
Note that $X$, as we've defined it, exists in $\mathbb{R}^{m-1};$ since $m-1\leq n$, $X$ can naturally be embedded in $\mathbb{R}^n$.
We claim that $X$ is in strong general position.

\subsection*{Strong general position condition (i)}
Let $k$ be an integer with $0\leq k\leq n-1$. Suppose $F$ is a $k$-flat generated by points of $X$, $\{\mathbf{e}_\alpha\}_{\alpha\in\mathcal{A}}$. 
Any point in $F$ is of the form $\sum_{\alpha\in\mathcal{A}}c_\alpha\mathbf{e}_\alpha$, where $\sum_{\alpha\in\mathcal{A}}c_\alpha=1$.
Since the other elements of $X$ are each of the form $\mathbf{e}_\beta$ for some $\beta\notin\mathcal{A}$, no other element of $X$ is an element of $F$.

\subsection*{Strong general position condition (ii)}
Suppose $F_A$ and $F_B$ are disjoint flats generated by points in $X$:
suppose $F_A$ is generated by $\{\mathbf{e}_\alpha\}_{\alpha\in\mathcal{A}}$ and $F_B$ is generated by $\{\mathbf{e}_\beta\}_{\beta\in\mathcal{B}}$.
Note that $\mathcal{A}\cap\mathcal{B}=\emptyset$, since the flats are disjoint.
Suppose $\mathbf{a}$ and $\mathbf{b}$ are non-zero displacement vectors contained in $F_A$ and $F_B$, respectively.
Then there exist coefficients $\{\ell_\alpha\}_{\alpha\in\mathcal{A}}, \{m_\alpha\}_{\alpha\in\mathcal{A}}, \{p_\beta\}_{\beta\in\mathcal{B}}, \{q_\beta\}_{\beta\in\mathcal{B}}$ with $\sum_{\alpha\in\mathcal{A}}\ell_\alpha=\sum_{\alpha\in\mathcal{A}}m_\alpha=\sum_{\beta\in\mathcal{B}}p_\beta=\sum_{\beta\in\mathcal{B}}q_\beta=1$ and
\begin{align*}
\mathbf{a}
&=\sum_{\alpha\in\mathcal{A}}\ell_\alpha\mathbf{e}_\alpha-\sum_{\alpha\in\mathcal{A}}m_\alpha\mathbf{e}_\alpha
=\sum_{\alpha\in\mathcal{A}}(\ell_\alpha-m_\alpha)\mathbf{e}_\alpha \text{ and } \\ \mathbf{b}
&=\sum_{\beta\in\mathcal{B}}p_\beta\mathbf{e}_\beta-\sum_{\beta\in\mathcal{B}}q_\beta\mathbf{e}_\beta
=\sum_{\beta\in\mathcal{B}}(p_\beta-q_\beta)\mathbf{e}_\beta.
\end{align*}
Let $\lambda$ be a real number and assume $\mathbf{a}=\lambda\mathbf{b}$.
Then we have $\lambda=0$, since $\mathcal{A}\cap\mathcal{B}=\emptyset$.
So there are no parallel vectors in $F_A$ and $F_B$.

\subsection*{Strong general position condition (iii)}
Consider two disjoint flats $F_A$ and $F_B$ generated by points in $X$:
suppose $F_A$ is generated by $\{\mathbf{e}_\alpha\}_{\alpha\in\mathcal{A}}$ and $F_B$ is generated by $\{\mathbf{e}_\beta\}_{\beta\in\mathcal{B}}$, where $\mathcal{A}\cap\mathcal{B}=\emptyset$.
In order to prove that $X$ satisfies condition (iii) of strong general position, we need to identify a vector $\mathbf{w}$ that points from $F_A$ to $F_B$ whose length is equal to the distance between $F_A$ and $F_B$.
We claim that the vector $\mathbf{w}$ is given by the difference between the centroids of the points and we show that $\mathbf{w}$ does have the appropriate length by showing that $\mathbf{w}$ is orthogonal to all displacement vectors in $F_A$ and $F_B$, respectively. 

Suppose $|\mathcal{A}|=a$ and $|\mathcal{B}|=b$, and define the centroids $\mathbf{c}_A=\frac{1}{a}\sum_{\alpha\in\mathcal{A}}\mathbf{e}_\alpha$ and $\mathbf{c}_B=\frac{1}{b}\sum_{\beta\in\mathcal{B}}\mathbf{e}_\beta$.
Then $\mathbf{w}=\mathbf{c}_B-\mathbf{c}_A=\frac{1}{b}\sum_{\beta\in\mathcal{B}}\mathbf{e}_\beta - \frac{1}{a}\sum_{\alpha\in\mathcal{A}}\mathbf{e}_\alpha$.
Suppose $\mathbf{v}$ is a displacement vector in $F_A$.
Then $\mathbf{v}$ is a difference of two affine combinations of the generators of $F_A$, so there exist coefficients $\{\ell_\alpha\}_{\alpha\in\mathcal{A}}$ and $\{m_\alpha\}_{\alpha\in\mathcal{A}}$
such that $\sum_{\alpha\in\mathcal{A}}\ell_\alpha=\sum_{\alpha\in\mathcal{A}}m_\alpha=1$ and $\mathbf{v}=\sum_{\alpha\in\mathcal{A}}\ell_\alpha\mathbf{e}_\alpha - \sum_{\alpha\in\mathcal{A}}m_\alpha\mathbf{e}_\alpha=\sum_{\alpha\in\mathcal{A}}(\ell_\alpha-m_\alpha)\mathbf{e}_\alpha$.
We compute the dot product of $\mathbf{v}$ and $\mathbf{w}$, obtaining the following:

\begin{align*}
    \mathbf{v}\cdot\mathbf{w} &= \mathbf{v}\cdot\mathbf{c}_B - \mathbf{v}\cdot\mathbf{c}_A \\
    &= -\mathbf{v}\cdot\mathbf{c}_A &\text{ since } \mathbf{e}_\alpha\cdot\mathbf{e}_\beta=0 \text{ for all } \alpha\in\mathcal{A}, \beta\in\mathcal{B}\\
    &= -\left(\sum_{\alpha\in\mathcal{A}}(\ell_\alpha-m_\alpha)\mathbf{e}_\alpha\right) \cdot \frac{1}{a}\sum_{\alpha\in\mathcal{A}}\mathbf{e}_\alpha\\
    &= -\frac{1}{a}\sum_{\alpha_1\in\mathcal{A}}\sum_{\alpha_2\in\mathcal{A}}(\ell_{\alpha_1}-m_{\alpha_1})\mathbf{e}_{\alpha_1}\cdot\mathbf{e}_{\alpha_2}\\
    &= -\frac{1}{a}\sum_{\alpha_1\in\mathcal{A}}\sum_{\alpha_2\in\mathcal{A}}(\ell_{\alpha_1}-m_{\alpha_1})\delta_{\alpha_1,\alpha_2}\\
    &= -\frac{1}{a}\sum_{\alpha_1\in\mathcal{A}}(\ell_{\alpha_1}-m_{\alpha_1})\\
    &= -\frac{1}{a}(1-1)\\
    &= 0.
\end{align*}
So $\mathbf{w}$ is orthogonal to $F_A$.
The verification that $\mathbf{w}$ is orthogonal to $F_B$ is similar.

Thus $\mathbf{w}$ is the vector pointing from $F_A$ to $F_B$ with length equal to the distance between the hyperplanes.
The hyperplane orthogonal to $\mathbf{w}$ containing $F_A$ is then given by $\mathcal{P}\colon\mathbf{w}\cdot\mathbf{x}+\frac{1}{a}=0$.
We need to show that $\mathcal{P}$ contains no other points of $X$.
Let $Y$ be the points in $X$ that are not generators of $F_A$ and $F_B$.
Define $\mathcal{K}=\{1,2,\ldots,m\}-(\mathcal{A}\cup\mathcal{B})$; then $Y=\{\mathbf{e}_k\}_{k\in\mathcal{K}}$.
We will show that each $\mathbf{y}\in Y$ fails to satisfy the equation for the hyperplane $\mathcal{P}$.
Choose an arbitrary $\mathbf{y}\in Y$ such that $\mathbf{y}=\mathbf{e}_k$ for some $k\in\mathcal{K}$.
We have
\[
    \mathbf{w}\cdot\mathbf{y} + \frac{1}{a} = \left( \frac{1}{b}\sum_{\beta\in\mathcal{B}}\mathbf{e}_\beta - \frac{1}{a}\sum_{\alpha\in\mathcal{A}}\mathbf{e}_\alpha \right) \cdot \mathbf{e}_k + \frac{1}{a}
    =\frac{1}{a}
    \neq 0,
\]
since $\mathbf{e}_k$ is orthogonal to each $\mathbf{e}_\alpha$ and each $\mathbf{e}_\beta$.
Thus $\mathcal{P}$ contains no other points of $X$, and condition (iii) of strong general position is satisfied.\\

\end{document}